\documentclass{article}

    \PassOptionsToPackage{numbers, compress}{natbib}

\usepackage[final]{neurips_2025}

\usepackage[utf8]{inputenc} 
\usepackage[T1]{fontenc}    
\usepackage{hyperref}       
\usepackage{url}            
\usepackage{booktabs}       
\usepackage{amsfonts}       
\usepackage{nicefrac}       
\usepackage{microtype}      
\usepackage{xcolor}   
\usepackage{pgfplots}
\pgfplotsset{compat=1.18}
\usepackage{subcaption}
\usepackage{amsmath}
\usepackage{amssymb}
\usepackage{mathtools}
\usepackage{amsthm}
\usepackage{tikz}
\usepackage{algorithm}
\usepackage{algpseudocode}
\usepackage{centernot}
\usetikzlibrary{arrows.meta}
\usepackage{xspace}
\usepackage{bbm}

\usepackage{wrapfig}

\usetikzlibrary{spn}
\usepackage[dvipsnames,svgnames]{xcolor}
\definecolor{mygrey}{gray}{0.8}

\theoremstyle{plain}
\newtheorem{theorem}{Theorem}[section]
\newtheorem{proposition}[theorem]{Proposition}

\newtheorem{corollary}[theorem]{Corollary}
\theoremstyle{definition}
\newtheorem{definition}[theorem]{Definition}

\newtheorem{counterexample}{Counterexample}
\theoremstyle{remark}

\makeatletter

\newcommand{\proofsketchname}{Proof sketch}
\makeatother

\newcommand{\midlinewidth}{1.0pt}



\newcommand{\R}{\mathbb{R}}
\newcommand{\D}{\mathcal{D}}

\newcommand{\C}{\mathcal{C}}
\newcommand{\cut}{\neg g \wedge h}
\newcommand{\Df}[2]{D_f(#1 \Vert #2)}
\newcommand{\KL}[2]{D_{\mathsf{KL}}(#1 \Vert #2)}
\newcommand{\Dtv}[2]{D_{\mathsf{TV}}(#1 \Vert #2)}

\newcommand{\abs}[1]{\left\lvert#1\right\rvert}

\newcommand{\sat}[1]{{#1}^{-1}(1)}

\DeclareMathOperator{\MC}{MC}
\DeclareMathOperator{\val}{val}
\DeclareMathOperator{\inc}{in}

\newcommand{\rvars}[1]{\ensuremath{\mathbf{#1}}\xspace}
\newcommand{\X}{\rvars{X}}
\newcommand{\Y}{\rvars{Y}}
\newcommand{\E}{\rvars{E}}
\newcommand{\Q}{\rvars{Q}}

\newcommand{\NP}{\mathsf{NP}}

\newcommand{\jstate}[1]{\ensuremath{\mathbf{#1}}\xspace}
\newcommand{\x}{\jstate{x}}
\newcommand{\y}{\jstate{y}}
\newcommand{\e}{\jstate{e}}
\newcommand{\q}{\jstate{q}}

\title{On the Hardness of Approximating Distributions with Tractable Probabilistic Models}

%


\author{%
  John Leland \qquad YooJung Choi \\
  School of Computing and Augmented Intelligence\\
  Arizona State University\\
  \texttt{jslelan1@asu.edu, yj.choi@asu.edu} \\
}



\begin{document}

\maketitle

\begin{abstract}
  A fundamental challenge in probabilistic modeling is to balance expressivity and inference efficiency. Tractable probabilistic models (TPMs) aim to directly address this tradeoff by imposing constraints that guarantee efficient inference of certain queries while maintaining expressivity. In particular, probabilistic circuits (PCs) provide a unifying framework for many TPMs, by characterizing families of models as circuits satisfying different structural properties. Because the complexity of inference on PCs is a function of the circuit size, understanding the size requirements of different families of PCs is fundamental in mapping the trade-off between tractability and expressive efficiency. However, the study of expressive efficiency of circuits are often concerned with exact representations, which may not align with model learning, where we look to approximate the underlying data distribution closely by some distance measure. Moreover, due to hardness of inference tasks, exactly representing distributions while supporting tractable inference often incurs exponential size blow-ups. In this paper, we consider a natural, yet so far underexplored, question: \textit{can we avoid such size blow-up by allowing for some small approximation error?} We study approximating distributions with probabilistic circuits with guarantees based on $f$-divergences, and analyze which inference queries remain well-approximated under this framework. We show that approximating an arbitrary distribution with bounded $f$-divergence is $\NP$-hard for any model that can tractably compute marginals. In addition, we prove an exponential size gap for approximation between the class of decomposable PCs and that of decomposable and deterministic PCs.
\end{abstract}

\section{Introduction}\label{sec:intro}

The expressive power of probabilistic and generative models has increased rapidly in recent years: from Bayesian networks \citep{darwiche2008bayesian}, GANs \citep{goodfellow2014generativeadversarialnetworks}, VAEs \citep{kingma2022auto}, and normalizing flows \citep{papamakarios2021normalizing}, to diffusion models \citep{ho2020denoisingdiffusionprobabilisticmodels} and transformers \citep{vaswani2023attentionneed} have demonstrated remarkable success in capturing complex distributions. Yet, despite their expressivity, many of these models do not support efficient computation of fundamental probabilistic queries, such as marginals and conditionals, which are critical for inference in domains such as healthcare \citep{conditionalsHealth}, neuro-symbolic AI \citep{skryagin2023scalableneuralprobabilisticanswerset}, environmental science \citep{env}, and algorithmic fairness~\citep{ChoiAAAI21}. 

Tractable probabilistic models---such as probabilistic circuits \citep{choi2020probabilistic}, probabilistic generating circuits \citep{zhang2021probabilistic}, determinantal point processes \citep{borodin2009determinantal}, and more---address the need for probabilistic queries by balancing expressivity and tractable inference, achieved through enforcing constraints on the models. To more compactly describe the constraints enforced, we focus on probabilistic circuits, which also have extensive literature on the conditions of tractability~\citep{choi2020probabilistic}. While these structural constraints enable efficient inference, they introduce a tradeoff by potentially affecting the models' ability to compactly represent distributions (i.e., their expressive power). Naturally, there have been many works characterizing the expressive efficiency of different circuit classes~\citep{darwiche2002knowledge,bova2016knowledge,choi2020probabilistic,yin2024on,zhang2020relationship,de2021compilation}---i.e., their ability to compactly and \textit{exactly} represent certain classes of functions or distributions.
However, the approximate case---how structural constraints affect the ability to \textit{approximately represent} distributions, remains comparatively underexplored.
This motivates a shift in focus from exact to approximate modeling: representing a distribution approximately within a small distance under some metric.

This shift raises a fundamental question: \textit{does allowing a small approximation error alleviate the exponential separation between circuit classes observed in exact modeling, or does hardness persist even in the approximate setting?} 
Our motivation for this study is two-fold. 
(1) In learning probabilistic circuits from finite data, often the goal is not necessarily to exactly represent some known distribution but rather to approximate it as closely as possible with a PC of reasonable size. Thus, showing that certain distributions cannot be approximated within a bounded distance by a compact PC satisfying some structural properties implies that any learning algorithm whose hypothesis space is that family of PCs would fail to learn the distribution with a bounded approximation error. 
(2) Moreover, probabilistic circuits can also be used to perform inference on other probabilistic models (such as Bayesian networks or probabilistic programs) by compiling them into PCs then running efficient inference on the compiled circuits~\citep{chan2002reasoning,holtzen2020scaling,darwiche2003differential,ChoiIJCAI17,fierens2015inference}. This suggests the following approximate inference scheme: \textit{approximately} compile a probabilistic model into a PC then run efficient \textit{exact} inference on the approximately compiled PC. Moreover, if we could bound the distance between the target distribution and approximate model, we can hope to provide guarantees on the approximate inference results as well.

Our main contributions are as follows: (1) we prove that it is $\NP$-hard to approximate distributions within a bounded $f$-divergence using \textit{any model that supports tractable marginals}, with proof via a reduction from SAT (Theorem~\ref{thm:dec-hardness} and \ref{thm:dec-hardness-tvd}); (2) we derive an unconditional, exponential separation between decomposable PCs and decomposable \& deterministic PCs for approximate modeling (Theorem~\ref{thm:exp-det}); (3) we study the relationship between bounds on divergence measures for approximate modeling and approximation errors for marginal and maximum-a-posteriori (MAP) inference, characterizing when one is or is not sufficient to guarantee the other (Sections~\ref{sec:rel-approx} and \ref{sec:inference}).

\section{Preliminaries} 

\paragraph{Notations}
We use uppercase letters (\(X\)) to denote random variables and lowercase letters (\(x\)) to denote assignments to these random variables. Sets of random variables and assignments are denoted using bold letters (\(\mathbf{X} \text{ and } \x\)). 
The \textit{accepting models} (i.e., satisfying assignments) of a Boolean function $f:\{0,1\}^n \rightarrow \{0,1\}$ over \(n\) variables is denoted by \(f^{-1}(1)\).
The number of accepting models of \(f\) is referred to as \(\MC(f)\), a shorthand for its \textit{model count}. Moreover, a Boolean function \(f\) is the \textit{support} of a distribution \(P\), if \(P\) is non-zero only over the models of \(f\). 

\subsection{Probabilistic Circuits} 
Probabilistic circuits (PCs) \citep{choi2020probabilistic} provide a unifying framework for a wide class of tractable probabilistic models, including arithmetic circuits \citep{darwiche2002logical}, sum-product Networks \citep{poon2011sum}, cutset networks \citep{rahman2014cutset}, probabilistic sentential decision diagrams \citep{kisa2014probabilistic}, and bounded-treewidth graphical models \citep{elidan2008learning,ChowLiu}. 

\begin{definition}[Probabilistic circuits] A probabilistic circuit (PC) \(\mathcal{C} := (\mathcal{G},\theta) \) represents a joint probability distribution \(p(\mathbf{X})\) over random variables \(\X\) through a directed acyclic graph (DAG) \(\mathcal{G}\) parameterized by \(\theta\). The DAG is composed of 3 types of nodes: leaf, product \(\otimes\), and sum \(\oplus\) nodes. Every leaf node in \(\mathcal{G}\) is an input, and every internal node receives inputs from its children \(\inc(n)\). The scope of a given node, \(\phi(n)\), is a recursively defined function which associates to each unit $n$ a subset of $\X$: for each non-input unit $n$, $\phi(n) = \cup_{c \in \inc(n)} \phi(c)$, and the scope of a leaf node is a single variable in $\X$. Naturally, the scope of the root node is $\X$.  
Each node $n$ of a PC is then recursively defined as:
\begin{equation}\label{eq:PC}
    p_n(\x) := \begin{cases}
        l(x), & \text{ if $n$ is a leaf}                         \\
        \prod_{c\in \inc(n)} p_c(\x) & \text{ if $n$ is a $\otimes$}           \\
        \sum_{c\in \inc(n)} \theta_{n,c} p_c(\x)  & \text{ if $n$ is a $\oplus$} \\
    \end{cases}
\end{equation}
where \(\theta_{n,c} \in [0,1]\) is the parameter associated with the edge connecting nodes \(n,c\) in \(\mathcal{G}\), and \(\sum_{c\in \inc(n)} \theta_{n,c} = 1.\) In this paper, we assume $l(x)$ at a leaf node is a Boolean indicator function: i.e., $\mathbbm{1}[x=1]$ or $\mathbbm{1}[x=0]$. The distribution represented by the circuit is the output at its root node.
\end{definition}

A key characteristic of PCs is that imposing certain structural properties on the circuit enables tractable (polytime) computation of various queries. In this paper we focus on two families of PCs: those that are tractable for \textit{marginal} inference and for \textit{maximum-a-posteriori (MAP)} inference.

The class of marginal queries of a joint distribution \(p(\X)\) over variables \(\X\) refers the set of functions that can compute \(p(\y)\) for some assignment $\y$ for \(\Y \subseteq \X\). Marginalization is a fundamental statistical operation which enables reasoning about subsets of variables, essential for tasks such as decision making, learning, and predicting under uncertainty.  
While marginal inference is $\mathsf{\#P}$-hard in general \citep{roth1996hardness, cooper1990computational}, the family of PCs satisfying the following structural conditions admit tractable marginal inference---specifically in linear time in the size of the circuit~\citep{darwicheACs}.
\begin{definition}[Smoothness and decomposability]\label{def:sm}
A sum unit is \textit{smooth} if its children have identical scopes: \(\phi(c) = \phi(n),\, \forall c \in \inc(n)\). A product unit is \textit{decomposable} if its children have disjoint scopes: \(\phi(c_i) \cap \phi(c_j) = \emptyset,\, \forall c_i \neq c_j \in \inc(n)\). A PC is smooth and decomposable iff every sum unit is smooth and every product unit is decomposable.
\end{definition}
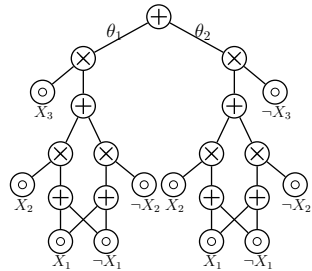
\begin{wrapfigure}[14]{r}{0.37\textwidth}
\centering
\scalebox{0.5}{
    \begin{tikzpicture}[every node/.style={inner sep=0pt, minimum size=8pt}]
    
      \sumnode[line width=\midlinewidth]{s1};
    
      \prodnode[line width=\midlinewidth, below left=0.9cm of s1, xshift=-25pt]{p21};
      \prodnode[line width=\midlinewidth, below right=0.9cm of s1, xshift=+25pt]{p22};
      \weigedge[line width=\midlinewidth,left=0.22cm of s1,pos=0.3] {s1} {p21} {\Large$\theta_{1}$};
      \weigedge[line width=\midlinewidth,right=0.22cm of s1,pos=0.3] {s1} {p22} {\Large$\theta_{2}$};
    
      \sumnode[line width=\midlinewidth, below=0.6cm of p21]{s31};
      \bernode[line width=\midlinewidth, below left=0.6cm of p21, xshift=-5pt]{v31}{\large$X_3$};
      \sumnode[line width=\midlinewidth, below=0.6cm of p22]{s32};
      \bernode[line width=\midlinewidth, below right=0.6cm of p22, xshift=5pt]{v32}{\large$\neg X_3$};
      \edge[line width=\midlinewidth]{p21}{s31,v31};
      \edge[line width=\midlinewidth]{p22}{s32,v32};
      
      \prodnode[line width=\midlinewidth,below=0.6cm of s31,xshift=-0.6cm]{p41};
      \prodnode[line width=\midlinewidth,below=0.6cm of s31,xshift=0.6cm]{p42};
      \prodnode[line width=\midlinewidth,below=0.6cm of s32,xshift=-0.6cm]{p43};
      \prodnode[line width=\midlinewidth,below=0.6cm of s32,xshift=0.6cm]{p44};
      
      \edge[line width=\midlinewidth]{s31}{p41};
      \edge[line width=\midlinewidth]{s31}{p42};
      \edge[line width=\midlinewidth]{s32}{p43};
      \edge[line width=\midlinewidth]{s32}{p44};
    
      \sumnode[line width=\midlinewidth, below=0.5cm of p41]{s51};
      \sumnode[line width=\midlinewidth, below=0.5cm of p42]{s52};
      \sumnode[line width=\midlinewidth, below=0.5cm of p43]{s53};
      \sumnode[line width=\midlinewidth, below=0.5cm of p44]{s54};
      
      \bernode[line width=\midlinewidth,below left=0.5cm of p41, xshift=-5pt]{v51}{\large$X_2$};
      \bernode[line width=\midlinewidth,below right=0.5cm of p42, xshift=5pt]{v52}{\large$\neg X_2$};
      \bernode[line width=\midlinewidth,below left=0.5cm of p43, xshift=-5pt]{v53}{\large$X_2$};
      \bernode[line width=\midlinewidth,below right=0.5cm of p44, xshift=5pt]{v54}{\large$\neg X_2$};
    
      \edge[line width=\midlinewidth]{p41}{v51,s51};
      \edge[line width=\midlinewidth]{p42}{v52,s52};
      \edge[line width=\midlinewidth]{p43}{v53,s53};
      \edge[line width=\midlinewidth]{p44}{v54,s54};
    
      \bernode[line width=\midlinewidth,below=0.5cm of s51]{v61}{\large$X_1$};
      \bernode[line width=\midlinewidth,below=0.5cm of s52]{v62}{\large$\neg X_1$};
      \bernode[line width=\midlinewidth,below=0.5cm of s53]{v63}{\large$X_1$};
      \bernode[line width=\midlinewidth,below=0.5cm of s54]{v64}{\large$\neg X_1$};
    
      \edge[line width=\midlinewidth]{s51}{v61,v62};
      \edge[line width=\midlinewidth]{s52}{v61,v62};
      \edge[line width=\midlinewidth]{s53}{v63,v64};
      \edge[line width=\midlinewidth]{s54}{v63,v64};

    \end{tikzpicture}}
\caption{A smooth, decomposable, and deterministic PC (weights shown only for the root for conciseness).}
\label{fig:ourcircuit}
\end{wrapfigure}
In addition, we are often interested in finding the most likely assignments given some observations. The class of maximum-a-posteriori (MAP)\footnote{Sometimes also called the most probable explanation (MPE).} queries of a joint distribution $p(\X)$ is the set of queries that compute \(
\max_{\q\in val(\Q)} p(\q,\e)\) where \(\e \in val(\E)\) is an assignment to some subset $\E \subseteq \X$ and $\Q=\X \setminus \E$. 
Again, MAP inference is $\mathsf{NP}$-hard in general~\citep{shimony1994finding} but can be performed tractably for a certain class of PCs. In particular, smoothness and decomposability are no longer sufficient, and we must enforce an additional condition.
\begin{definition}[Determinism]\label{def:det}
A sum node is \textit{deterministic} if, for any fully-instantiated input, the output of at most one of its children is nonzero. In other words, the supports of its children are mutually disjoint. A PC is deterministic iff all of its sum nodes are deterministic.
\end{definition}
Figure~\ref{fig:ourcircuit} depicts an example PC that is smooth, decomposable, and deterministic, which thus supports tractable marginal as well as MAP inference.\footnote{For PCs over Boolean variables, a weaker form of decomposability called \textit{consistency}~\citep{poon2011sum,choi2020probabilistic} actually suffices instead of decomposability for both tractable marginal and MAP inference. In this paper, we still focus on classes of PCs that are decomposable as they are the most commonly considered, both as learning targets as well as for characterizing expressive efficiency.}

\paragraph{Logical circuits} 
Probabilistic circuits are closely related to \textit{logical circuits} in the \textit{knowledge compilation} literature~\citep{darwiche2002knowledge}. Logical circuits encode Boolean functions as directed acyclic graphs consisting of AND ($\wedge$) and OR ($\vee$) gates with positive and negative literals as leaf nodes. We can also characterize different families of logical circuits based on their structural properties: e.g., decomposable negation normal forms (DNNFs) and deterministic decomposable negation normal forms (d-DNNFs).\footnote{Structural conditions are same as before (Definitions~\ref{def:sm} and \ref{def:det}), except that smoothness and determinism apply to OR gates and decomposability to AND gates.} There is a rich literature studying different logical circuit families in terms of their tractability for inference and operations, as well as their relative \textit{succinctness (expressive efficiency)} for both exact and approximate compilation, which we will leverage for our hardness results and size lower bounds on probabilistic circuits.

\subsection{Measures of Difference between Probability Distributions}
To study the hardness of approximating probability distributions, we first need to be able to measure how ``good'' an approximation is. In particular, we focus on the class of $f$-divergences.
\begin{definition}[$f$-divergence \citep{polyanskiy_fdivergence}]
     Let \(f: (0,\infty) \to \R \) be a convex function with \(f(1) = 0\), and \(P,Q\) be two probability distributions over a set of Boolean variables $\X$. If \(Q > 0\) wherever \(P>0\), the \textit{$f$-divergence} between $P$ and $Q$ is defined as 
    $D_f(P \vert\vert Q) = \sum_\x Q(\x)f(\frac{P(\x)}{Q(\x)})$.
\end{definition}
Commonly used $f$-divergences include the Kullback-Leibler divergence, \(\chi^2\)-divergence, and total variation distance. The total variation distance is especially relevant to our results.
\begin{definition}[Total variation distance]
    The \textit{total variation distance (TVD)} between two probability distributions $P$ and $Q$ over a set of $n$ Boolean variables $\X$ is defined as 
    $\Dtv{P}{Q} = \frac{1}{2}\sum_{\x\in \X} \abs{P(\x)-Q(\x)}$, or equivalently $\Dtv{P}{Q} = \max_{S \subseteq \{0,1\}^n}\abs{P(S)-Q(S)}$. 
\end{definition}

We introduce the following notion to describe probabilistic models that approximate distributions within some bounded distance.
\begin{definition}[\(\epsilon\)-\(D\)-Approximation]
    Let \(P,Q\) be two probability distributions and \(D\) be a distance measure between distributions. We say that \(Q\) is an \emph{\(\epsilon\)-\(D\)-approximator} of \(P\) if \(D(P\vert \vert Q) < \epsilon\) for some \(\epsilon > 0 \). 
\end{definition}
For instance, we refer to a probabilistic circuit \(Q\) that approximates our target distribution \(P\) such that \(\Dtv{P}{Q} < \epsilon\) a \(\epsilon\)-\(\D_{\mathsf{TV}}\)-\textit{approximator}. The majority of our results are derived using properties of the total variation distance, due to its nice properties as a distance metric. To extend our results to other $f$-divergences, we utilize the following class, which provides an upper bound on the total variation distance. 
\begin{definition}[$k$-convex $f$-divergence \citep{melbourne2020strongly}]\label{def: k-convex}
    A $\R \cup \{\infty\}$-valued function $f$ on a convex set \(K\subseteq \R\) is \textit{$k$-convex} if \(x,y\in K\) and \(t \in [0,1]\) implies
    \(
        f((1-t)x + ty) \leq (1-t)f(x) + tf(y) - kt(1-t)\frac{(x-y)^2}{2}.
    \)
    When \(f\) is twice differentiable, this is equivalent to \(f''(x)\geq k \) for all \(x \in K\). In the case that \(k = 0 \) this reduces to the normal notion of convexity. An $f$-divergence \(D_f\) is \textit{$k$-convex} over an interval \(K\) for \(k \geq 0\) if the function \(f\) is $k$-convex on \(K\).
\end{definition}
We provide a table in Appendix~\ref{table:k} summarizing which $f$-divergence measures are $k$-convex and for which value of $k$.
Throughout this paper, we express approximation bounds using $k$-convex $f$-divergences as they naturally encapsulate bounds on many common distance measures. For instance, for any $k$-convex $f$-divergence between $P$ and $Q$, we have that \(\Dtv{P}{Q}^2 < {D_f(P\vert \vert Q)}/{k}\)~\citep{melbourne2020strongly}. As we will see later, the bounds on the TV distance can naturally be connected to guarantees for approximation inference. For KL-divergence, which is the most commonly used objective for learning probabilistic models, we can use Pinsker's inequality~\citep{pinsker} to obtain \(\Dtv{P}{Q} < \sqrt{\frac{1}{2}\KL{P}{Q}}\).

\section[Approximate Modeling with Tractable Marginals is NP-hard]{Approximate Modeling with Tractable Marginals is $\mathsf{NP}$-hard}\label{sec:hardness}
Most works characterizing the expressive efficiency of different circuit classes have been concerned with \textit{exact} representations~\citep{darwiche2002knowledge,bova2016knowledge,choi2020probabilistic,yin2024on,zhang2020relationship,de2021compilation}. While \citet{chubarian2020interpretability} and \citet{de2021lower} have recently studied the ability (and hardness) of logical circuit classes to compactly \textit{approximate} Boolean functions, to the best of our knowledge, our results are the first to show hardness of \textit{compactly approximating probability distributions} using different families of tractable PCs. 

As discussed previously, the complexity of approximately modeling distributions with PCs is valuable for understanding: (1) potential limitations in the hypothesis space of PC learning algorithms, and (2) the feasibility of approximate inference with guarantees through approximate compilation. This section aims to answer this focusing on probabilistic models that are tractable for marginal queries. We first show that a form of approximate marginal inference using this scheme requires a non-trivial bound on the total variation distance between the target distribution and the approximate model, and next prove that finding such an approximator is $\mathsf{NP}$-hard.

\subsection{Relative Approximation of Marginals}\label{sec:rel-approx}

We consider \textit{relative approximation}\footnote{Also called multiplicative approximation or approximation within a factor.} of marginal queries.  Let $P(\mathbf{X})$ be a probability distribution over a set of variables $\mathbf{X}$. Then we say another distribution $Q(\mathbf{X})$ is a \textit{relative approximator} 
of marginals of $P$ w.r.t.\ $0 \leq \epsilon \leq 1$ if: $ \frac{1}{1+\epsilon} \leq \frac{P(\y)}{Q(\y)} \leq 1+\epsilon$ for every assignment $\y$ to subset of variables $Y\subseteq \mathbf{X}$. Relative approximation is often considered for approximate inference of graphical models and the closely related approximate (weighted) model counting \citep{dubray2024anytimeWMC,chakraborty2016approximate}.
We first show that relative approximation for all marginal queries implies a non-trivially bounded total variation distance.
\begin{theorem}[Relative approximation implies bounded \(\Dtv{P}{Q}\)] \label{thm:reltotvd}
    Let \(\epsilon > 0\) and \(P,Q\) be two probability distributions over \(\X\). If \(Q\) is a relative approximator of marginals for \(P\), then \(\Dtv{P}{Q} \leq \frac{\epsilon}{2}\).
\end{theorem}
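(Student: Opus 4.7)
The plan is to observe that the definition of a relative approximator of marginals quantifies over \emph{every} subset $Y \subseteq \X$, and in particular applies when $Y = \X$ and $\y$ is a full joint assignment $\x$. This immediately yields the pointwise two-sided bound $\frac{1}{1+\epsilon} \leq \frac{P(\x)}{Q(\x)} \leq 1+\epsilon$ for every $\x$, which will be the engine of the proof. Once this pointwise control is in hand, bounding $\Dtv{P}{Q} = \tfrac{1}{2}\sum_{\x}\abs{P(\x) - Q(\x)}$ reduces to a one-line additive estimate on each summand.

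First I would convert the multiplicative bound into an additive one of the form $\abs{P(\x) - Q(\x)} \leq \epsilon\, Q(\x)$, splitting on the sign of $P(\x) - Q(\x)$. When $P(\x) \geq Q(\x)$, the upper side of the bound gives $P(\x) - Q(\x) \leq \epsilon Q(\x)$ directly. When $P(\x) < Q(\x)$, the lower side gives $Q(\x) - P(\x) \leq Q(\x)\bigl(1 - \tfrac{1}{1+\epsilon}\bigr) = \tfrac{\epsilon}{1+\epsilon}\, Q(\x) \leq \epsilon Q(\x)$. The second case is actually strictly sharper, but loosening it to $\epsilon Q(\x)$ is enough to land the claimed $\epsilon/2$ bound without extra bookkeeping.

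Summing over $\x$ then finishes the argument: $\Dtv{P}{Q} = \tfrac{1}{2}\sum_{\x}\abs{P(\x) - Q(\x)} \leq \tfrac{\epsilon}{2}\sum_{\x} Q(\x) = \tfrac{\epsilon}{2}$, using only that $Q$ is a probability distribution. I do not expect any real obstacle; the only conceptual point worth flagging is that the "for every subset $Y$" quantifier in the definition of relative approximation is precisely what makes the pointwise reduction legitimate. If the definition were instead restricted to \emph{strict} marginal queries over proper subsets, one would have to reconstruct pointwise information from marginal identities, which would be substantially more delicate.
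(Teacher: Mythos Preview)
Your proposal is correct and essentially identical to the paper's proof: the paper also specializes the relative-approximation bound to full assignments $\x$, derives a pointwise additive bound (stated there as $\abs{P(\x)-Q(\x)} \leq \epsilon\min(P(\x),Q(\x))$ rather than your $\epsilon Q(\x)$, a cosmetic difference), and sums to get $\epsilon/2$.
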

\begin{proof}
    As $Q$ is a relative approximator of $P$, for all assignment $\x$ we have that \(\frac{1}{1+\epsilon} \leq \frac{P(\x)}{Q(\x)} \leq 1+\epsilon\) which implies \(\abs{P(\x)-Q(\x)} \leq \epsilon \min(P(\x),Q(\x))\). Therefore, $\Dtv{P}{Q} = \frac{1}{2} \sum_\x \abs{P(\x)-Q(\x)} \leq \frac{1}{2} \sum_\x \epsilon \min(P(\x),Q(\x)) \leq \frac{\epsilon}{2}$.
\end{proof}
In other words, \(\Dtv{P}{Q} \leq \epsilon/2\) is a necessary condition for $Q$ to be a relative approximator of marginals of $P$ w.r.t.\ $\epsilon$. However, it is still not a sufficient condition as shown below.
\begin{proposition}[Bounded \(\Dtv{P}{Q}\) does not imply relative approximation]\label{thm:kFd-familia}
    There exists a family of distributions \(P\) that have \(\epsilon\)-\(D_{\mathsf{TV}}\)-approximators, yet for any such approximator $Q$, the relative approximation error of marginals between $P$ and $Q$ can be arbitrarily large.
\end{proposition}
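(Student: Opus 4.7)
The plan is to exhibit a concrete family of Bernoulli-like distributions whose $\epsilon$-$\D_{\mathsf{TV}}$-neighborhoods contain approximators with arbitrarily large relative marginal error. The intuition is an asymmetry in what the two notions control: total variation bounds absolute mass differences $|P(\x)-Q(\x)|$, whereas relative approximation requires multiplicative control of the ratio $P(\x)/Q(\x)$. Any event whose true probability is smaller than the TV budget can therefore be almost zeroed-out by a TV-close approximator, sending its marginal ratio toward infinity. So the construction should place non-trivial mass on a rare event whose probability is below $\epsilon$.

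Concretely, I would take the family $\{P_\delta\}$ of Bernoulli distributions over a single Boolean variable $X$ defined by $P_\delta(X=1)=\delta$ for some $\delta<\epsilon$, and for each integer $k\ge 1$ build a candidate approximator $Q_{\delta,k}$ that shrinks the rare-event probability to $\delta/k$ and inflates $X=0$ accordingly. The remainder is a short verification: a one-line TV calculation gives $\Dtv{P_\delta}{Q_{\delta,k}}=\delta(1-1/k)<\delta<\epsilon$, so every $Q_{\delta,k}$ lies in the $\epsilon$-$\D_{\mathsf{TV}}$-ball around $P_\delta$, while the marginal ratio $P_\delta(X=1)/Q_{\delta,k}(X=1)=k$ is unbounded as $k\to\infty$. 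This exhibits, for each fixed $P_\delta$ in the family and any target ratio $M$, an $\epsilon$-$\D_{\mathsf{TV}}$-approximator whose relative error exceeds $M$, which is the desired conclusion.

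Since the construction and calculations are elementary, there is no substantive technical obstacle. The only care required is a definitional one: the relative-approximation ratio is well-defined only when $Q(\y)>0$ wherever $P(\y)>0$, which my construction respects since $Q_{\delta,k}(X=1)=\delta/k>0$ for every finite $k$. A minor stylistic choice is whether to state the family as a single-variable example or to embed the rare event inside a larger distribution to emphasize that the phenomenon is not an artifact of low dimensionality; the single-variable version is the cleanest and already suffices for the proposition.
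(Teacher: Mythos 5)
Your construction is correct and is essentially the paper's own argument: the paper also scales the mass of a small-probability event $A$ (with $P(A)=\delta$) down by a factor $K$ and renormalizes on $A^c$, yielding vanishing divergence while the ratio $P(\x)/Q(\x)$ stays at $K$; your version is the single-variable Bernoulli specialization of this. The only difference is that the paper carries out the computation for a general $f$-divergence rather than just $\Dtv{P}{Q}$, which does not change the substance.
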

We prove the above proposition by explicitly constructing a family of distributions $\mathcal{Q}$ such that every $Q \in \mathcal{Q}$ is an \(\epsilon\)-\(D_f\)-approximator for any arbitrary \(\epsilon > 0\) and distribution \(P\) yet $P(\x)/Q(\x)$ can be arbitrarily large for some $\x$. See Appendix~\ref{proofs:FamilyConstruct} for the full construction.

It is known that relative approximation of marginals is $\NP$-hard for Bayesian networks~\citep{dagum1993approximating}.\footnote{In fact, there also exists no randomized polynomial time algorithm for relative approximation of marginals unless $\mathsf{RP} = \NP$~\citep{dagum1993approximating}.} Thus, it immediately follows that approximately representing arbitrary distributions using polynomial-sized PCs tractable for marginals (e.g., decomposable PCs) such that the PC is a relative approximator of all marginals is also $\NP$-hard. However, approximating \textit{all} marginal queries is quite a strong condition, and we may still want to closely approximate distributions as they could be useful in approximating \textit{some} marginal queries.
In particular, because approximating a distribution $P$ with a bounded TV distance is a necessary but not sufficient condition for the $\NP$-hard problem of relative approximation of marginals, this raises the question whether it is still possible to efficiently approximate the distribution $P$ with a compact PC $Q$ that is tractable for marginals. Unfortunately, we next answer this in the negative.

\subsection{Hardness of Approximating Distributions using Tractable Models for Marginals}

We consider approximating potentially \textit{unnormalized} distributions, which can be considered a generalization of probability distributions by omission of the normalizing constant.
\begin{definition}
    [Unnormalized Distributions]\label{def: unnormalized}
    Any (unnormalized) distribution $\hat{P}: \val(\X) \to \R$ must satisfy the following: (1) $\hat{P}(\x) \geq 0$ for any $\x$; (2) The normalization constant $Z = \sum_{\x \in \X} \hat{P}(\x)$ is well-defined and finite.
\end{definition}
See from this definition that an unnormalized distribution can easily be converted to a probability distribution $P$ if $Z$ is computable in polynomial time: $P(\x)=\hat{P}(\x)/Z$. Unnormalized distributions are relevant to the goal of tractable approximation, as many probabilistic models that we may want to approximate---including factor graphs~\citep{factor_graphs} and energy-based models~\citep{EBMS}---represent unnormalized distributions.

\begin{theorem}[Hardness of \(D_f\)-approximation]\label{thm:dec-hardness}
    Given a (potentially unnormalized) probability distribution $\hat{P}$ and a $k$-convex $f$-divergence $D_f$, for any $0 < \epsilon < \frac{1}{4}$, it is $\mathsf{NP}$-hard to represent the $k\epsilon^2$-\(D_f\)-approximation of its normalized distribution $P$ as a model that can tractably compute marginals.
\end{theorem}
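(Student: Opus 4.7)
The plan is to reduce from SAT, and begin by invoking the $k$-convexity inequality $\Dtv{P}{Q}^2 \leq \Df{P}{Q}/k$ from the preliminaries so that every $k\epsilon^2$-$D_f$-approximator is automatically an $\epsilon$-$\D_{\mathsf{TV}}$-approximator. It therefore suffices to show that, given a distribution $P$, producing any tractable-marginals model $Q$ with $\Dtv{P}{Q} < \epsilon$ is $\NP$-hard for every $\epsilon \in (0, 1/4)$.

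Given a SAT instance $\phi$ over $n$ Boolean variables $\X$, I would introduce a fresh Boolean variable $F$ and let $P_\phi$ on $\X \cup \{F\}$ be the uniform distribution over $S_\phi \triangleeq \{(\x, 0) : \x \in \sat{\phi}\} \cup \{(\x_0, 1)\}$, where $\x_0$ is any fixed assignment to $\X$. Because $(\x_0, 1) \in S_\phi$ always, the support is non-empty and $P_\phi$ is well-defined regardless of $\phi$'s satisfiability. The key quantity is the marginal $P_\phi(F = 1) = 1/(\MC(\phi)+1)$, which equals $1$ when $\phi$ is unsatisfiable ($|S_\phi| = 1$) and is at most $1/2$ when $\phi$ is satisfiable ($|S_\phi| \geq 2$); this constant gap of $1/2$ is precisely what lets the reduction tolerate any $\epsilon < 1/4$.

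From any tractable-marginals approximator $Q$ of $P_\phi$ with $\Dtv{P_\phi}{Q} < \epsilon$, the TVD bound yields $|P_\phi(F = 1) - Q(F = 1)| < \epsilon$, so that $Q(F = 1) > 1 - \epsilon > 3/4$ in the UNSAT case and $Q(F = 1) < 1/2 + \epsilon < 3/4$ in the SAT case. Since $Q$ admits tractable marginals, $Q(F = 1)$ is computable in polynomial time, and a single threshold test at $3/4$ then decides SAT. A polynomial-time algorithm for producing such a $Q$ would therefore yield a polynomial-time algorithm for SAT, establishing $\NP$-hardness.

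The main obstacle I anticipate is justifying the input representation of $P_\phi$: although it is always a valid distribution, its exact point probabilities depend on $\MC(\phi)$, which is $\#\mathsf{P}$-hard to compute. I would sidestep this by treating $P_\phi$ as specified implicitly through its support---the formula $\phi$ together with the convention that $P_\phi$ is uniform over $S_\phi$---or, equivalently, by a polynomial-size (possibly intractable) representation such as a Bayesian network whose zero-probability pattern encodes $S_\phi$. The theorem then asserts that even with such a succinct description of $P_\phi$, producing any tractable-marginals approximator would entail solving SAT.
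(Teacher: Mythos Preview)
Your proposal is correct and takes essentially the same approach as the paper: reduce from SAT by forming the uniform distribution over the satisfying assignments of $\phi$ augmented with one fixed point indexed by a fresh Boolean variable, invoke the $k$-convexity bound to pass to TVD, and then threshold the tractably computed marginal of that fresh variable. The only cosmetic difference is a label flip---you attach $F=1$ to the single fixed point (giving $P_\phi(F{=}1)=1$ in the UNSAT case and threshold $3/4$), while the paper attaches $Y=1$ to the SAT models (giving $P(Y{=}1)=0$ in the UNSAT case and threshold $1/4$); your extra paragraph on how $P_\phi$ is specified is a reasonable elaboration that the paper leaves implicit.
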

\begin{proof}
We will prove the above using a reduction from SAT.
Let \(\hat{P}\) be a Boolean formula over \(\X = \{X_1, \ldots, X_n\}\) and \(\epsilon < \frac{1}{4}\). See that $\hat{P}$ satisfies all requirements to be considered an unnormalized probability distribution. We then define a new Boolean formula $\hat{P}'$ over $\X$ and an auxiliary variable $Y$: \(\hat{P}' = (Y \wedge \hat{P}) \vee (\neg Y \wedge X_1 \wedge \cdots \wedge X_n)\). Clearly \(\hat{P}'\) has \(\MC(\hat{P})+1\) models. Let us now define a uniform distribution $P$ over these models of $\hat{P}'$ (i.e., by normalizing $\hat{P}'$). Suppose that we can efficiently obtain a probability distribution \(Q\) such that \(\Df{P}{Q} < k\epsilon^2\), which in turn implies that \( \Dtv{P}{Q} < \epsilon\). From the definition of total variation distance, $\abs{P(Y=1) - Q(Y=1)} < \epsilon < \frac{1}{4}$.

By construction, if $\hat{P}$ is unsatisfiable, there is no satisfying model of $\hat{P}'$ such that $Y=1$, and thus $P(Y=1) = 0$ and $Q(Y=1) < \frac{1}{4}$.
Otherwise, if $\hat{P}$ is satisfiable, then there are $\MC(\hat{P})$ many satisfying model of $\hat{P}'$ setting $Y=1$, and thus we have $\MC(\hat{P})\geq 1$ and $P(Y=1) = \frac{\MC(\hat{P})}{1+\MC(\hat{P})}$, implying
\begin{align*}
    Q(Y = 1) &> \frac{\MC(\hat{P})}{1+\MC(\hat{P})} - \frac{1}{4} \geq \frac{1}{2} - \frac{1}{4} \geq \frac{1}{4}.
\end{align*}
Therefore, $\hat{P}$ is satisfiable if and only if $Q(Y=1) \geq \frac{1}{4}$. In other words, we can decide SAT if we can efficiently compute an $k\epsilon^2$-\(D_f\)-approximation as a model that supports tractable marginals.
\end{proof}
The following corollary immediately follows from the above proof.
\begin{corollary}
    \label{thm:dec-hardness-tvd}
    Given a (potentially unnormalized) probability distribution $\hat{P}$, for $0 < \epsilon < \frac{1}{4}$, it is $\NP$-hard to represent the $\epsilon$-\(D_{\mathsf{TV}}\)-approximation of its normalized distribution $P$ as a model that can tractably compute marginals.
\end{corollary}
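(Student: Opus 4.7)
The plan is to mirror the SAT reduction used in Theorem~\ref{thm:dec-hardness}, but to exploit the fact that the total variation distance directly bounds differences of event probabilities, so no appeal to the $k$-convexity inequality is needed and the bound becomes linear in $\epsilon$ rather than quadratic. Given a SAT instance $f$ over $\X = (X_1, \ldots, X_n)$, I would build the same auxiliary formula $f' = (Y \wedge f) \vee (\neg Y \wedge X_1 \wedge \cdots \wedge X_n)$ over $\X \cup \{Y\}$ and let $P$ be uniform over $\sat{f'}$. Counting satisfying assignments, $\MC(f') = \MC(f) + 1$, so $P(Y=1) = \MC(f)/(\MC(f)+1)$ is $0$ when $f$ is unsatisfiable and at least $1/2$ when $f$ is satisfiable.

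Next I would suppose $Q$ is an $\epsilon$-$\D_{\mathsf{TV}}$-approximator of $P$ that tractably computes marginals. Applying the dual characterization $\Dtv{P}{Q} = \max_{S \subseteq \{0,1\}^{n+1}} |P(S) - Q(S)|$ to the event $\{Y = 1\}$, I get $|P(Y=1) - Q(Y=1)| < \epsilon < \tfrac{1}{4}$. Consequently $Q(Y=1) < \tfrac{1}{4}$ when $f$ is unsatisfiable and $Q(Y=1) > \tfrac{1}{4}$ when $f$ is satisfiable. Since $Q$ admits tractable marginals, $Q(Y=1)$ can be evaluated in polynomial time, and comparing it to $\tfrac{1}{4}$ decides SAT; hence producing such a $Q$ from $P$ in polynomial time would place SAT in $\mathsf{P}$.

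I expect no real obstacle beyond keeping the reduction in the same input-representation formalism as Theorem~\ref{thm:dec-hardness}: the distribution $P$ must be presented compactly (e.g., implicitly as the uniform measure over the models of the polynomial-sized formula $f'$) so that the overall reduction from SAT remains polynomial. All of this bookkeeping is already established for the parent theorem, so the corollary is essentially a simplification of that argument—dropping the factor $k$ and replacing the $k\epsilon^2$-$D_f$ bound with the linear $\epsilon$-$\D_{\mathsf{TV}}$ bound—reflecting the fact that TVD is precisely the extremal divergence for event-probability discrepancies.
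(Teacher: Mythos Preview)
Your proposal is correct and essentially identical to the paper's own argument: the proof of Theorem~\ref{thm:dec-hardness} in Appendix~\ref{appx:dec-hardness} already passes through the bound $\Dtv{P}{Q}<\epsilon$ and the same threshold test on $Q(Y=1)$, so the corollary is exactly the simplification you describe---dropping the $k$-convexity step and starting directly from the TV bound.
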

Thus, the class of polynomial-sized probabilistic models supporting tractable marginals cannot contain \(k\epsilon^2\)-\(\D_f\)-approximations for all distributions unless $\mathsf{P}=\NP$. For instance, this includes decomposable PCs~\citep{choi2020probabilistic}, sum of squares circuits~\citep{loconte2025sum}, probabilistic generating circuits~\citep{zhang2021probabilistic}, determinantal point processes~\citep{borodin2009determinantal}, Inception PCs~\citep{wang2025relationship}, and positive unital circuits~\citep{dosquantum}.
Furthermore, by Pinsker's inequality this suggests that there exists distributions for which obtaining a decomposable PC with bounded KL-divergence of \(1/8\) is $\NP$-hard. 
While \citet{martens2014expressive} previously showed a related result that there exists a function for which a sequence of decomposable PCs converging to approximate the function arbitrarily well requires an exponential size, our result applies more broadly to any class of models supporting tractable marginals as well as allowing for a bounded \textit{but non-vanishing} approximation error. Thus, it is difficult not only to exactly represent functions or distributions as compact decomposable PCs, but also to approximate within some small distance. 
Consequently, this presents a major challenge to the idea of using approximate compilation of PCs for approximate inference if the desired error tolerance $\epsilon$ is sufficiently small.

\section{Large Deterministic \& Decomposable PCs for Approximate Modeling}\label{sec:large-circuits}

Continuing our characterization of the approximation power of PCs, we now turn to the family of deterministic and decomposable PCs. We take inspiration from related results for logical circuits. In particular, \citet{bova2016knowledge} proved an exponential separation between DNNFs and d-DNNFs: i.e., there is a family of Boolean functions that can be compactly represented by decomposable circuits but requires exponentially sized deterministic and decomposable circuits. Furthermore, \citet{de2021lower} showed that there exist functions that require exponential size to approximate with d-DNNFs under two notions of approximation for Boolean functions.

Nevertheless, this does not immediately imply the same separation for probabilistic circuits due to two key reasons: (1) approximation for PCs is measured in terms of divergences between distributions rather than some probabilistic error between Boolean functions, and (2) our approximator can represent arbitrary distributions instead of being limited to a Boolean function (or a uniform distribution over it). This section presents our proof of exponential separation between the class of decomposable PCs and that of deterministic and decomposable ones, by constructing a family of distributions that can be represented by compact decomposable PCs, but any PC that is also deterministic and approximates it within a bounded TV distance must have an exponential size.

We consider the \textit{Sauerhoff function}~\citep{sauerhoff2003approximation} which was used to show the separation between DNNFs and d-DNNFs for exact compilation~\citep{bova2016knowledge}.
Let \(g_n: \{0,1\}^n \to \{0,1\}\) be a function evaluating to 1 if and only if the sum of its inputs is divisible by 3. The \textit{Sauerhoff function} is defined as \(S_n: \{0,1\}^{n^2} \to \{0,1\}\) over the \(n \times n\) matrix \(X = (x_{ij})_{1 \leq i, j\leq n}\) such that 
\(
    S_n(X) = R_n(X) \vee C_n(X),
\)
where \(R_n, C_n: \{0,1\}^{n^2} \to \{0,1\}\) are defined as
\(
    R_n(X) = \bigoplus_{i = 1}^n g_n(x_{i,1},x_{i,2},\ldots, x_{i,n})
\)
and \(C_n(X) = R_n(X^T)\). Here, \(\oplus\) represents addition modulo 2.

There exists a DNNF of size \(O(n^2)\) that exactly represents the Sauerhoff function $S_n$, constructed as a disjunction of two compact ordered binary decision diagrams (OBDDs)---a more restrictive kind of deterministic and decomposable circuits---that represent $R_n$ and $C_n$, respectively~\citep[Proposition 7]{bova2016knowledge}. We then define our family of target distributions $P_n$ as follows: 
let \(\C_n\) be a DNNF for \(S_n\) with size \(O(n^2)\); then \(P_n\) is a decomposable PC obtained by replacing the literals of \(\C_n\) with corresponding indicator functions,  \(\wedge\) with \(\otimes\), and \(\vee\) with \(\oplus\) nodes with uniform parameters, followed by smoothing the circuit.\footnote{Smoothing a decomposable PC takes polynomial (worst-case quadratic) time~\citep{choi2020probabilistic}.} 
Note that \(P_n\) outputs a positive value on an input \(\x\) if and only if \(S_n(\x) = 1\). We will show that a deterministic and decomposable PC approximating \(P_n\) requires exponential size.

\begin{theorem}[Exponential-size deterministic PC]\label{thm:exp-det}
    A deterministic and decomposable PC that is a \(\epsilon\)-\(D_{\mathsf{TV}}\)-approximator of \(P_n\) for some \(\epsilon \leq \frac{1}{16}-\Omega(1/\text{Poly}(n^2))\) has size \(2^{\Omega(n)}\).
\end{theorem}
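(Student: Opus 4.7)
The strategy is to reduce to the exponential lower bound on d-DNNF approximations of the Sauerhoff function $S_n$ due to de Colnet and Mengel: any d-DNNF approximating $S_n^{-1}(1)$ to within symmetric-difference fraction below some constant $\delta_0 \approx \tfrac{1}{4}$ requires size $2^{\Omega(n)}$. From any polynomial-sized deterministic and decomposable PC $Q$ that is an $\epsilon$-$D_{\mathsf{TV}}$-approximator of $P_n$, I would extract a polynomial-sized d-DNNF approximating $S_n^{-1}(1)$ to symmetric-difference fraction at most $4\epsilon$, yielding the contradiction whenever $4\epsilon < \delta_0$, which is precisely the source of the constant $\tfrac{1}{16}$.

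The first step is the syntactic substitution $\oplus \mapsto \vee$, $\otimes \mapsto \wedge$, which converts any deterministic and decomposable PC of size $s$ into a d-DNNF of size $s$ whose accepting set is the support $T = \mathrm{supp}(Q)$. Since $P_n$ is uniform over $S_n^{-1}(1)$ with mass $1/M$ per point (where $M = |S_n^{-1}(1)|$), the inequality $|P_n(T) - Q(T)| \leq \epsilon$ immediately yields $|S_n^{-1}(1) \setminus T| \leq \epsilon M$, controlling the false negatives.

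The main obstacle is that TVD does \emph{not} control the false positives $|T \setminus S_n^{-1}(1)|$, because $Q$ can in principle spread arbitrarily small mass across many spurious assignments. I would handle this by passing to the thresholded set $h^{-1}(1) = \{\mathbf{x} : Q(\mathbf{x}) \geq 1/(2M)\}$: each point in $h^{-1}(1) \mathbin{\triangle} S_n^{-1}(1)$ contributes at least $1/(4M)$ to $D_{\mathsf{TV}}(P_n, Q)$, so $|h^{-1}(1) \mathbin{\triangle} S_n^{-1}(1)| < 4\epsilon M \leq 4\epsilon \cdot 2^{n^2}$. The key technical step is then to show that this thresholded set can be represented by a d-DNNF of size polynomial in $|Q|$: determinism guarantees that $Q(\mathbf{x})$ is computed along a unique active path through the sum nodes, so local threshold tests can be propagated through the circuit by exploiting the disjoint-support structure at each sum and the multiplicative factorization at each product, possibly combined with a pruning step that zeros out negligible-mass subcircuits without shifting $Q$ much in TVD.

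Invoking de Colnet and Mengel's lower bound on this polynomial-sized d-DNNF approximating $S_n^{-1}(1)$ within symmetric-difference fraction $4\epsilon < \delta_0$ then forces $|Q| = 2^{\Omega(n)}$, establishing the theorem. The $\Omega(1/\text{Poly}(n^2))$ slack in the admissible $\epsilon$ absorbs lower-order $o(1)$ terms from the precise value of $\delta_0$ and from a standard XOR-concentration analysis of the mod-$3$ checks $g_n$ that governs $M/2^{n^2}$ for Sauerhoff. I anticipate the compilation of the thresholded set into a polynomial-sized d-DNNF to be the main technical hurdle: thresholding a product of weights is knapsack-like in general and can blow up d-DNNF size, so the proof will have to carefully leverage determinism (and the specific value $1/(2M)$) to avoid this blowup.
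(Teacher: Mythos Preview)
Your approach is the paper's approach: threshold the support of $Q$, convert to a d-DNNF weakly approximating $S_n$, and invoke the exponential lower bound on such d-DNNFs. But there is one factual error you should fix: $P_n$ is \emph{not} uniform over $S_n^{-1}(1)$. It is defined by taking the specific $O(n^2)$-size DNNF for $S_n$ (a disjunction of two OBDDs for $R_n$ and $C_n$), replacing $\vee,\wedge$ by $\oplus,\otimes$ with uniform parameters, and smoothing; the root sum is non-deterministic, so assignments satisfying both $R_n$ and $C_n$ receive roughly double mass. The paper inserts exactly the triangle-inequality step you are missing: it bounds $\Dtv{U}{P_n}\le \eta = O(2^{-n^2})$ using Sauerhoff's low-$0$-density property of $S_n$, then works with $\Dtv{U}{Q}\le \Dtv{U}{P_n}+\Dtv{P_n}{Q}$ before applying the weak-approximation argument relative to the uniform $U$. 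This is also where the $\Omega(1/\mathrm{Poly}(n^2))$ slack is actually spent.

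For the thresholding hurdle you correctly flag, the paper does not attempt to compile a global threshold predicate. Instead it \emph{iteratively prunes edges} of $Q$: compute for each edge an upper bound on the probability of any assignment routed through it (the Edge-Bounds procedure of Choi et al.), delete edges whose bound falls below $1/2^{N+1}$ (with $N=n^2$), recompute, and repeat until stable. Determinism guarantees each assignment has a unique active path, so pruning an edge removes exactly the low-mass assignments through it and never kills a high-mass one; this is how the knapsack blowup is avoided. The support of the pruned circuit, read off as a d-DNNF, is then shown to be a $4\epsilon$-weak-approximator of $S_n$, and the size lower bound comes from Sauerhoff's two-sided rectangle-approximation bound combined with the d-DNNF-to-balanced-rectangle-partition reduction of Bova et al., rather than directly from de~Colnet--Mengel.
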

We will prove the above by first showing that approximation of $P_n$ with a deterministic and decomposable PC implies a form of \textit{weak approximation}~\citep{de2021lower} of $S_n$ with a d-DNNF of the same size, and next proving that such d-DNNF must be exponentially large.

\begin{definition}[Weak approximation~\citep{de2021lower}]\label{def:weakappx}
A Boolean formula \(g\) is a weak \(\epsilon\)-approximation of another Boolean formula \(f\) if 
\(
    \MC(f \wedge \neg g) + \MC(\neg f \wedge g) \leq \epsilon \cdot 2^n.
\)
\end{definition}
\begin{proposition}[Bounded \(D_{\mathsf{TV}}\) implies weak approximation]\label{thm:weakappxnonunif}
    Let \(0 \leq \epsilon < \frac{1}{8}\) and $P$ be a uniform distribution whose support is given by a Boolean function $f$. Suppose that $Q$ is a deterministic and decomposable PC representing an \(\epsilon\)-\(D_{\mathsf{TV}}\)-approximator of \(P\). Then there {exists a d-DNNF $g$ which has size polynomial in the size of $Q$} that represents a {\(4\epsilon\)-weak-approximator} of \(f\). 
\end{proposition}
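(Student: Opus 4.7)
The plan is to take $g$ to represent the ``heavy-mass'' set $\{\x : Q(\x) \geq \tau\}$ with threshold $\tau = 1/(2\MC(f))$, and then show both that $g$ is a $4\epsilon$-weak-approximator of $f$ via a direct total-variation argument, and that such a $g$ can be constructed as a d-DNNF of size polynomial in $|Q|$ by threshold propagation through the structure of $Q$.

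For the weak-approximation bound, I would first observe that every $\x$ in the symmetric difference $\sat{f} \, \triangle \, \sat{g}$ contributes at least $1/(2\MC(f))$ to $\sum_\x |P(\x) - Q(\x)| = 2\Dtv{P}{Q}$. Indeed, for $\x \in A = \sat{f} \setminus \sat{g}$, $P(\x) = 1/\MC(f)$ while $Q(\x) < \tau$, so $P(\x) - Q(\x) > 1/(2\MC(f))$; and for $\x \in B = \sat{g} \setminus \sat{f}$, $P(\x) = 0$ while $Q(\x) \geq \tau$, so $Q(\x) - P(\x) \geq 1/(2\MC(f))$. Summing and using $\Dtv{P}{Q} < \epsilon$ yields
\[
    \frac{|A| + |B|}{2\MC(f)} \;\leq\; 2\Dtv{P}{Q} \;<\; 2\epsilon,
\]
so $|A| + |B| < 4\epsilon \MC(f) \leq 4\epsilon \cdot 2^n$, establishing that $g$ is a $4\epsilon$-weak-approximator of $f$.

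For the d-DNNF construction, I would recursively build, for each sub-PC $Q_n$ and each of polynomially many threshold levels $\tau'$, a d-DNNF node representing $\{\x : Q_n(\x) \geq \tau'\}$. At sum nodes, determinism makes the disjunction over children pairwise disjoint, with each child inheriting the rescaled threshold $\tau'/w_i$, which preserves determinism. At product nodes, the condition $Q_{c_1}(\x_1) Q_{c_2}(\x_2) \geq \tau'$ is expressed as a disjoint disjunction over pairs of per-scope level indicators whose thresholds multiply to at least $\tau'$, combined by AND to preserve decomposability. The hard part will be this product-node case, since the multiplicative threshold does not factor cleanly across disjoint sub-scopes; the key technical step is to discretize thresholds into polynomially many levels so that the d-DNNF size remains polynomial in $|Q|$ while the effective threshold still yields the desired $4\epsilon$ constant in the bound above.
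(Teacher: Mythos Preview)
Your weak-approximation bound is correct and mirrors the paper's argument; the paper uses the threshold $1/2^{n+1}$ rather than $1/(2\MC(f))$, which avoids dependence on the unknown $\MC(f)$ but is otherwise equivalent since $\MC(f)\le 2^n$.

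The gap is in the d-DNNF construction. At a product node the set $\{\x : Q_{c_1}(\x_1)\,Q_{c_2}(\x_2) \geq \tau'\}$ does not factor into per-scope threshold sets, and expressing it as a disjoint union over level pairs requires bounding the number of distinct values each child attains---which in a deterministic decomposable PC can be exponential in $|Q|$. Discretizing thresholds, as you suggest, introduces two-sided slack, and then your per-point lower bound of $1/(2\MC(f))$ on the TV contribution no longer holds; recovering the $4\epsilon$ constant would require an additional error analysis you have not supplied. The paper sidesteps this entirely: since $Q$ is deterministic, every $\x$ has a unique accepting subcircuit, so one computes for each edge the maximum $Q(\x)$ over assignments using that edge (the Edge-Bounds quantity) and iteratively prunes edges whose bound falls below $1/2^{n+1}$. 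The result is a \emph{subcircuit} $Q'$ of $Q$ whose support is exactly $\{\x : Q(\x)\ge 1/2^{n+1}\}$; replacing $\oplus,\otimes$ by $\vee,\wedge$ then gives a d-DNNF of size at most $|Q|$, with no level-tracking or discretization needed. Determinism is precisely what makes pruning safe here: deleting an edge removes certain assignments outright without altering $Q(\x)$ for any surviving $\x$.
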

\begin{proof}
    Suppose that $P$ is a uniform distribution over the support given by a Boolean function $f$, and $Q$ a probability distribution such that $\Dtv{P}{Q} < \epsilon < \frac{1}{8}$. 
    Then, we can construct an (unnormalized) deterministic and decomposable PC $Q'$ by pruning the edges of $Q$ such that  any assignment $\x$ is eliminated from the resulting support of $Q'$ if and only if $Q(\x) < \frac{1}{2^{n+1}}$. We call this support $g$. We provide this pruning algorithm, which relies on determinism as a key property, in Appendix~\ref{appx:prune}. Clearly, the size of $Q'$ is at most the size of $Q$. We will now briefly summarize how this pruning scheme induces a weak approximation and refer to Appendix~\ref{appx:TVDtoWeak} for the full derivation. Intuitively, we know that the support of $Q$ must cover most of the support of $P$, as $P$ is a uniform distribution and $\Dtv{P}{Q}$ is bounded. While it is possible for the support of $Q$ to be much larger than the support of $P$, then the probability assigned by $Q$ to assignments outside of the support of $P$ must also be very small to maintain a small TV distance. Thus, pruning away these assignments with small probability allows us to retrieve $Q'$ whose support has only a small number of non-overlapping assignments with models of $f$. 
    More formally, we know that by our pruning scheme, $Q(\x) \geq \frac{1}{2^{n+1}}$ on $g$ and $Q(\x) < \frac{1}{2^{n+1}}$ on $\neg g$. Using this fact, we can lower bound our original total variation distance by $\frac{1}{2}(\MC(f \wedge \neg g)/2^{n+1} + \MC(\neg f \wedge g)/2^{n+1}$), which implies that $\MC(f \wedge \neg g) + \MC(\neg f \wedge g) < 4\epsilon \cdot 2^n$. Thus, $g$ is a $4\epsilon$-weak-approximator of $f$, and we can represent it as a polynomially sized d-DNNF by taking the deterministic and decomposable PC $Q'$ and converting it to a logical circuit.
\end{proof}

\begin{proposition}[d-DNNF approximating \(S_n\) has exponential size]\label{thm:d-DNNFSn}
    A d-DNNF representing a \((\frac{1}{4} - \Omega(1/\text{Poly}(n^2)))\)-weak-approximation of \(S_n\) has size \(2^{\Omega(n)}\).
\end{proposition}
\begin{proof}
    Let \(\C\) be a d-DNNF such that it is a \((\frac{1}{4} - \Omega(1/\text{Poly}(n^2)))\)-weak-approximation of \(S_n\). 
    \citet{sauerhoff2003approximation} showed that any ``two-sided'' rectangle approximation\footnote{See Appendix~\ref{appx:rectangle} for details on rectangle partitions.} (which matches the notion of weak approximation) of $S_n$ within $\frac{1}{4} - \Omega(1/\text{Poly}(n^2))$ must have size \(2^{\Omega(n)}\). \citet{bova2016knowledge} further showed that a d-DNNF $\C$ computing a function $f$ is a balanced rectangle partition of $f$ with size at most $\abs{\C}$. Thus, $\C$ must have size \(2^{\Omega(n)}\).
\end{proof}

We are now ready to prove our main result about exponential size lower bound on deterministic and decomposable PCs as approximators.
\begin{proof}[Proof of Theorem~\ref{thm:exp-det}]
    Suppose that we have a deterministic and decomposable PC \(Q\) that is an \(\epsilon\)-\(D_{\mathsf{TV}}\)-approximator of \(P_n\), where \(\epsilon = (\frac{1}{16}-\Omega(1/Poly(n^2)))\). Consider then the uniform distribution \(U\) over \(S_n\). Then, by the triangle inequality, \(\Dtv{U}{Q} \leq \Dtv{U}{P_n} + \Dtv{P_n}{Q} < \Dtv{U}{P_n} + \epsilon.\) By \citet[Proposition 7]{bova2016knowledge}, we know that the DNNF constructed to represent the Sauerhoff function is a disjunction of two OBDDs, which respectively represent \(R_n, C_n\). As each OBDD can easily be translated to a d-DNNF with a polynomial size increase, their PC counterparts (OR to $\oplus$ and AND to $\otimes$) will still represent the same Boolean functions~\citep{choi2017relaxing}. Thus, the non-deterministic PC representing \(P_n\) based on this construction only has one non-deterministic sum node at the root, and can return values at most 2. 
    This allows us to see that \(\Dtv{U}{P} \leq \abs{\frac{1}{\abs{S_n}} - \frac{2}{\abs{S_n}+1}} < \frac{1}{\abs{S_n}} < \eta\) where  \(\eta = \frac{1}{(1-1/\sqrt{2})2^{n^2}}\) based on the fact that \(\abs{S_n} > (1-\beta)2^{n^2}\) for \(\beta < 1/\sqrt{2}\), derived from the low 0-density property of \(S_n\) under the uniform distribution~\citep{sauerhoff2003approximation}. Therefore, \(\Dtv{U}{Q} < \epsilon + \eta\). Since, the \(\Omega(1/Poly(n^2))\) term in $\epsilon$ subsumes \(\eta\), we can more simply say \(\Dtv{U}{Q} < \frac{1}{16} - \Omega(1/Poly(n^2))\). Next, we construct a d-DNNF \(\C'\) from \(Q\) by replacing indicators with literals, \(\oplus\) with \(\vee\), and \(\otimes\) with \(\wedge\). By Proposition~\ref{thm:weakappxnonunif}, \(\C'\) is a  \((\frac{1}{4} - \Omega(1/Poly(n^2)))\)-weak-approximation of \(S_n\). Thus, by Proposition~\ref{thm:d-DNNFSn}, \(\abs{Q} = \abs{\C'} =2^{\Omega(n)}\).
\end{proof}

To sum up, we constructed a decomposable PC $P_n$ that has size \(O(n^2)\) such that any deterministic and decomposable PC approximating it has size \(O(2^{\Omega(n)})\), thereby showing an unconditional exponential gap for approximation between decomposable PCs and deterministic and decomposable PCs. 
This result highlights a fundamental limitation: approximate modeling does not grant us an additional flexibility to overcome exponential expressive efficiency gaps. Moreover, we next show that approximate modeling, even when somehow obtained, is unfortunately still not enough to use PCs for efficient approximate inference with guarantees in general.

\section{Relationship between Approximate Modeling and Inference}\label{sec:inference}
We have shown that even if we allow some approximation error, it is hard to efficiently approximate distributions using tractable probabilistic circuits. Given that approximate modeling remains a hard task, one would hope that computing the approximators with bounded distance would allow us to approximate hard inference queries with bounded error. In this section, we study the relationship between approximate modeling and inference, in particular focusing on \textit{relative} and \textit{absolute} approximations of \textit{marginal}, \textit{conditional}, and \textit{maximum-a-posteriori (MAP)} queries.

In Section~\ref{sec:rel-approx}, we showed that bounded total variation distance is a necessary but not sufficient condition for relative approximation of marginals. We now consider a slightly weaker notion of approximation called \textit{absolute approximation}.
Let $P(\mathbf{X})$ be a probability distribution over a set of variables $\mathbf{X}$. 
Then we say another distribution $Q(\mathbf{X})$ is an \textit{absolute approximator} of marginals of $P$ with respect to $0 \leq \epsilon \leq 1$ if: $\abs{P(\y) - Q(\y)} \leq \epsilon$ for every assignment $\y$ to a subset of variables $Y\subseteq \mathbf{X}$. We show that any model that is a \(k\epsilon^2\)-\(D_f\)-approximator of $P$ must also be an absolute approximator of marginals of $P$ with respect to $\epsilon$.
\begin{theorem}[Bounded \(D_f\) implies absolute approximation of marginals]\label{thm:ftoMAR}
    Given two distributions $P$ and $Q$ over a set of variables $\mathbf{X}$ and $0 \leq \epsilon \leq 1$, if \(\Df{P}{Q} < k\epsilon^2\) then for all assignments $\y$ to a subset $\Y\subseteq \mathbf{X}$, we have $\abs{P(\y) - Q(\y)} < \epsilon$. 
\end{theorem}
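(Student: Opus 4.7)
The plan is to reduce the claim to a bound on total variation distance, then exploit the well-known fact that TV distance dominates the error on any event (and in particular on the cylinder set defined by a partial assignment).

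First, I would invoke the inequality for $k$-convex $f$-divergences from \citet{melbourne2020strongly} already cited in the preliminaries: $\Dtv{P}{Q}^2 \leq D_f(P\Vert Q)/k$. Plugging in the hypothesis $D_f(P\Vert Q) < k\epsilon^2$, this gives $\Dtv{P}{Q}^2 < \epsilon^2$, hence $\Dtv{P}{Q} < \epsilon$.

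Next, I would use the set-based characterization of TV distance stated in the preliminaries, namely $\Dtv{P}{Q} = \max_{S \subseteq \{0,1\}^n} |P(S) - Q(S)|$. For any subset $\Y \subseteq \X$ and assignment $\y$ to $\Y$, the marginal probability $P(\y)$ is precisely $P(S_\y)$, where $S_\y = \{\x \in \{0,1\}^n : \x \text{ is consistent with } \y\}$ is the cylinder set of full assignments extending $\y$; the analogous identity holds for $Q(\y)$. Therefore
\begin{equation*}
    |P(\y) - Q(\y)| = |P(S_\y) - Q(S_\y)| \leq \Dtv{P}{Q} < \epsilon,
\end{equation*}
which is the desired bound.

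There is no real obstacle here: the proof is essentially a two-line chain combining the $k$-convex $f$-divergence/TV inequality with the supremum-over-events formulation of TV distance, applied to the cylinder set of a partial assignment. The only thing worth being careful about is that the inequality $\Dtv{P}{Q}^2 \leq D_f(P\Vert Q)/k$ is stated for $k$-convex $f$-divergences (which is exactly the hypothesis of the theorem), so no additional regularity on $f$ is needed beyond what has already been assumed.
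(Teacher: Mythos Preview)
Your proposal is correct and follows essentially the same approach as the paper: reduce $D_f < k\epsilon^2$ to $\Dtv{P}{Q} < \epsilon$ via the $k$-convexity inequality of \citet{melbourne2020strongly}, then invoke the supremum-over-events characterization of TV distance on the cylinder set of a partial assignment. The paper additionally cites the monotonicity (data-processing) property of $f$-divergences before applying the event formulation, but this is a minor presentational difference and your direct use of the cylinder set $S_\y$ is equally valid.
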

\begin{proof}
    Note that while the absolute error of marginals is symmetric between $P$ and $Q$, $f$-divergence between $P$ and $Q$, such as the KL-divergence, is not symmetric. Therefore, we utilize the implications derived in~\citep{melbourne2020strongly}, that \(\Df{P}{Q} < k\epsilon^2\) then \(\Dtv{P}{Q} < \epsilon\). Moreover, given that the total variation distance is an $f$-divergence, we know that by the monotonicity property~\citep{polyanskiy_fdivergence} \(D_f(P(\Y),Q(\Y)) \leq D_f(P(\mathbf{X}),Q(\mathbf{X}))\) for any  \(\Y\subseteq \mathbf{X}\). By definition, \(\max_{S\subseteq \{0,1\}^n}\abs{P(S) - Q(S)} < \epsilon\), and thus \(\forall \y: \abs{P(\y) - Q(\y)} < \epsilon\).
\end{proof}
From the above proof, we also immediately derive the following corollary.
\begin{corollary}[Bounded \(D_{\mathsf{TV}}\) implies absolute approximation of marginals]\label{corollary:TVDtoMAR}
    Given two distributions $P$ and $Q$ over a set of variables $\mathbf{X}$ and $0 \leq \epsilon \leq 1$, if \(\Dtv{P}{Q} < \epsilon\), then for all assignments $\y$ to $\Y\subseteq \mathbf{X}$, we have $\abs{P(\y) - Q(\y)} < \epsilon$. 
\end{corollary}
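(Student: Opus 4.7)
The plan is to read off the corollary directly from the supremum-over-events formulation of total variation distance, using the fact that each marginal assignment $\y$ corresponds to a specific event on $\X$. First I would recall the equivalent definition given in the preliminaries: $\Dtv{P}{Q} = \max_{S \subseteq \{0,1\}^n} \abs{P(S) - Q(S)}$. Fix any $\Y \subseteq \X$ and assignment $\y$ to $\Y$. The set $S_\y := \{\x \in \{0,1\}^n : \x_\Y = \y\}$ is a valid subset of $\{0,1\}^n$, and by definition of marginalization $P(S_\y) = P(\y)$ and $Q(S_\y) = Q(\y)$. Thus $\abs{P(\y) - Q(\y)} = \abs{P(S_\y) - Q(S_\y)} \leq \Dtv{P}{Q} < \epsilon$, which is the desired bound.

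Alternatively, and perhaps more in the spirit of the $f$-divergence development used in Theorem~\ref{thm:ftoMAR}, I would invoke the monotonicity (data-processing) property of $f$-divergences: since marginalization onto $\Y$ is a deterministic channel, $\Dtv{P(\Y)}{Q(\Y)} \leq \Dtv{P(\X)}{Q(\X)} < \epsilon$. Applying the event-supremum characterization once more to $P(\Y)$ and $Q(\Y)$ on the singleton event $\{\y\}$ gives $\abs{P(\y) - Q(\y)} \leq \Dtv{P(\Y)}{Q(\Y)} < \epsilon$. This mirrors the structure of the preceding theorem while bypassing the $k$-convexity step, since here we already begin from a TV bound.

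There is essentially no obstacle: the result is a direct specialization of Theorem~\ref{thm:ftoMAR} to the case where the $f$-divergence is itself the total variation distance (equivalently, skipping the step $\Df{P}{Q} < k\epsilon^2 \Rightarrow \Dtv{P}{Q} < \epsilon$ from~\citep{melbourne2020strongly}). The only mild care needed is to note that subsets of $\Y$-assignments correspond to subsets of full $\X$-assignments so the maximum-over-events definition of TV applies unchanged, and that the inequality is strict because we assumed $\Dtv{P}{Q} < \epsilon$ strictly. A one-line proof should therefore suffice, possibly just citing Theorem~\ref{thm:ftoMAR} and noting that the intermediate bound via $k$-convexity is unnecessary when the hypothesis is already phrased in terms of TV distance.
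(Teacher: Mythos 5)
Your proof is correct and matches the paper's approach: the paper derives this corollary immediately from the proof of Theorem~\ref{thm:ftoMAR} by skipping the $k$-convexity step, which is exactly your observation, and your direct argument via the event $S_\y$ in the $\max$-over-events characterization of $\Dtv{P}{Q}$ is the same underlying computation. Nothing is missing.
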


Since marginals are tractable for decomposable PCs, approximating a target distribution with bounded $f$-divergence using a decomposable PC implies that marginals can be approximated in polynomial-time with bounded absolute error. This aligns with~\citet{dagum1993approximating}, who showed that there exists a randomized polynomial-time algorithm for the absolute approximation of marginals for Bayesian networks.

We next study approximate inference implications for MAP inference. Let $P(\mathbf{X})$ be a probability distribution over a set of variables $\mathbf{X}$.
We say another distribution $Q(\mathbf{X})$ is an absolute approximator of the \textit{maximum-a-posteriori} of $P$ with respect to $0 \leq \epsilon \leq 1$ if: for every assignment $\e$ (called the evidence) to a subset $\E\subseteq \mathbf{X}$, $\abs{\max_\y P(\y,\e) - \max_\y Q(\y,\e)} \leq \epsilon$ where $\Y = \mathbf{X} \setminus \E$. We next show that \(k\epsilon^2\)-\(D_{{f}}\)-approximators are also absolute approximators of MAP with respect to $\epsilon$.
\begin{theorem}[Bounded \(D_f\) implies absolute approximation of MAP]\label{thm:ftoMAP}
    Given two distributions \(P\) and \(Q\) over a set of variables \(\mathbf{X}\) and \(0 \leq \epsilon \leq 1\), if \(\Df{P}{Q} < k\epsilon^2\) then for every assignment \(\e\) to a subset \(\E \subseteq \mathbf{X}\), we have \(\lvert \max_{\y \in \Y} P(\y,\e) - \max_{\y \in \Y} Q(\y,\e) \rvert < \epsilon\) where $\Y = \mathbf{X} \setminus \E$.
\end{theorem}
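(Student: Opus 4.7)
The plan is to mirror the two-step reduction used in Theorem~\ref{thm:ftoMAR}: first convert the $k$-convex $f$-divergence bound into a bound on total variation distance, then show that a TV-distance bound already suffices to control the absolute error of MAP queries. In other words, I would split the claim into a ``conversion lemma'' and a ``TV-implies-MAP'' lemma, and compose them.

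For the conversion step, I would invoke exactly the same Melbourne-style inequality already used in Theorem~\ref{thm:ftoMAR}: the assumption $\Df{P}{Q} < k\epsilon^2$ together with $k$-convexity of $f$ yields $\Dtv{P}{Q} < \epsilon$. This is a one-liner that requires no new work beyond citing \citet{melbourne2020strongly}.

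The substantive step is to prove the pointwise bound
\[
\Bigl|\max_{\y} P(\y,\e) - \max_{\y} Q(\y,\e)\Bigr| \;\le\; \max_{\x}\bigl|P(\x)-Q(\x)\bigr| \;\le\; \Dtv{P}{Q}.
\]
I would prove the first inequality via the standard ``swap-the-argmax'' trick: let $\y^\star \in \argmax_{\y} P(\y,\e)$ and $\y^{\star\star} \in \argmax_{\y} Q(\y,\e)$. Then on one hand $\max_{\y} P(\y,\e) - \max_{\y} Q(\y,\e) \le P(\y^\star,\e) - Q(\y^\star,\e)$ since $Q(\y^{\star\star},\e) \ge Q(\y^\star,\e)$, and symmetrically the reverse difference is bounded by $Q(\y^{\star\star},\e) - P(\y^{\star\star},\e)$. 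Each of these is at most $\max_{\x}|P(\x)-Q(\x)|$, which in turn is at most $\Dtv{P}{Q}$ by taking $S=\{(\y,\e)\}$ as a singleton set in the variational definition of TV distance. Chaining these inequalities with the conversion step gives the desired $\epsilon$ bound.

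I do not expect a genuine obstacle here: the swap-the-argmax argument is elementary, and the $f$-divergence-to-TV step is exactly the inequality already invoked in Theorem~\ref{thm:ftoMAR}. The only subtlety worth flagging is that, unlike the marginal case, we cannot invoke the data-processing/monotonicity property of $f$-divergences on $(\Y,\E)$ because the $\max$ is not a stochastic kernel; this is precisely why the proof has to route through the pointwise $L^\infty$ bound $\max_{\x}|P(\x)-Q(\x)| \le \Dtv{P}{Q}$ rather than through monotonicity. Aside from stating this distinction clearly, the argument is a short composition of the two lemmas.
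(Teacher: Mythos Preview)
Your proposal is correct and follows essentially the same route as the paper: convert the $k$-convex $f$-divergence bound to $\Dtv{P}{Q}<\epsilon$ via \citet{melbourne2020strongly}, then use the pointwise $L^\infty$ bound $\max_{\x}|P(\x)-Q(\x)|\le \Dtv{P}{Q}$ together with a swap-the-argmax argument to control the MAP difference. The paper's proof is slightly terser (it does the argmax comparison via a one-sided WLOG rather than naming both maximizers), but the ideas and the chain of inequalities are identical.
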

\begin{proof}
    Analogous to Theorem~\ref{thm:ftoMAR}, we utilize the fact that if we have \(\Df{P}{Q} < k\epsilon^2\), we know \(\Dtv{P}{Q} < \epsilon\). Using $\Dtv{P}{Q} < \epsilon$, we have $\max_\x \abs{P(\x)-Q(\x)} < \epsilon$ by definition. Then for all $\x$, $Q(\x)-\epsilon < P(\x) < Q(\x)+\epsilon$. W.l.o.g., suppose $\max P(\x) > \max Q(\x)$. Then $\abs{\max P(\x) - \max Q(\x)} < (\max Q(\x)+\epsilon) - \max Q(\x)=\epsilon$. Thus, as this holds for all $\x$, we can extend this to \(\lvert \max_{\y \in \Y} P(\y,\e) - \max_{\y \in \Y} Q(\y,\e) \rvert < \epsilon\) for every assignment \(\e\) to a subset \(\E \subseteq \X\) and $\Y = \X \setminus \E$. 
    Thus, approximating with bounded $f$-divergence by a deterministic and decomposable PC implies polynomial-time approximation of MAP with bounded absolute error. 
\end{proof}
Again, we can restrict this to the special case of total variation distance via the above.
\begin{corollary}[Bounded \(D_{\mathsf{TV}}\) implies absolute approximation of MAP]\label{corollary:TVDtoMAP}
    Given two distributions \(P\) and \(Q\) over a set of variables \(\X\) and $0 \leq \epsilon \leq 1$, if \(\Dtv{P}{Q} < \epsilon\) then for every assignment \(\e\) to a subset \(\E \subseteq \X\), we have \(\lvert \max_{\y \in \Y} P(\y,\e) - \max_{\y \in \Y} Q(\y,\e) \rvert < \epsilon\) where $\Y = \X \setminus \E$.
\end{corollary}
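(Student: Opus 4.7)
The plan is to observe that this corollary is essentially a direct specialization of Theorem~\ref{thm:ftoMAP}, where we skip the intermediate step of converting an $f$-divergence bound to a TV-distance bound and instead start from the TV bound directly. So the proof proceeds exactly along the same lines as the second half of the proof of Theorem~\ref{thm:ftoMAP}.

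First, I would unpack the hypothesis $\Dtv{P}{Q} < \epsilon$ using the equivalent characterization $\Dtv{P}{Q} = \max_{S \subseteq \{0,1\}^n} \abs{P(S) - Q(S)}$ from the definition of TVD given in the preliminaries. Taking $S$ to be any singleton event, this immediately yields $\abs{P(\x) - Q(\x)} < \epsilon$ for every full assignment $\x$ to $\X$.

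Next, I would fix an arbitrary evidence assignment $\e$ to $\E \subseteq \X$ and let $\y_P^{\ast} \in \argmax_{\y} P(\y, \e)$ and $\y_Q^{\ast} \in \argmax_{\y} Q(\y, \e)$. Without loss of generality assume $\max_{\y} P(\y, \e) \geq \max_{\y} Q(\y, \e)$. Since $\max_{\y} Q(\y, \e) \geq Q(\y_P^{\ast}, \e)$, the difference is upper bounded by $P(\y_P^{\ast}, \e) - Q(\y_P^{\ast}, \e)$, which is strictly less than $\epsilon$ by the singleton instance of TVD above. The opposite direction is symmetric, yielding $\abs{\max_{\y} P(\y, \e) - \max_{\y} Q(\y, \e)} < \epsilon$.

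There is no real obstacle here; the only small subtlety is to resist the temptation to use the marginal-bound corollary (Corollary~\ref{corollary:TVDtoMAR}) directly, since $\max_{\y} P(\y, \e)$ is not a marginal probability but a maximum over joint probabilities. The argument instead needs the pointwise bound $\abs{P(\x) - Q(\x)} < \epsilon$ (which TVD on singletons supplies for free) together with the standard trick of evaluating both $P$ and $Q$ at the maximizer of one of them.
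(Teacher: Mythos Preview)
Your proposal is correct and follows essentially the same route as the paper: the corollary is treated as the second half of the proof of Theorem~\ref{thm:ftoMAP}, using the pointwise bound $\abs{P(\x)-Q(\x)} < \epsilon$ from TVD on singletons together with a WLOG comparison of the two maxima. Your version with explicit maximizers $\y_P^\ast,\y_Q^\ast$ is slightly more explicit than the paper's phrasing, but the argument is the same.
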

Thus, a deterministic and decomposable PC that is an \(\epsilon\)-\(D_{\mathsf{TV}}\)-approximator of a distribution \(P\) would imply that exact MAP inference w.r.t.\ this PC grants us tractable approximate MAP inference w.r.t.\ the original distribution $P$. 
However, the converse does not hold: a PC that can be used for approximate MAP inference is not necessarily a good approximation of the full distribution. 
\begin{counterexample}
    Consider a family of distributions $P(\x)$ such that $\max_\x P(\x) < \epsilon$. Then, we construct a distribution $P'(\X,Z)$ such that $P(\x, Z = 1) = P(\x)$ and $P(\x, Z = 0) = 0$. Similarly, let $Q(\X,Z)$ be such that $Q(\x, Z = 0) = P(\x)$ and $Q(\x, Z = 1) = 0$. Thus, for any assignment $\e$ to $\E \subseteq \X \cup \{Z\}$, $\abs{\max_\y P(\y, \e) - \max_\y Q(\y, \e)} \leq P(\x) < \epsilon$, so $Q$ is an absolute approximator of the MAP of $P$. However, $\Dtv{P}{Q} = 1$ as $P$ and $Q$ have disjoint supports.
\end{counterexample}

Lastly, not all tractable queries for PCs are guaranteed to admit absolute approximation even under this framework of approximate modeling with bounded distance. 
\begin{theorem}[Bounded \(D_{\mathsf{TV}}\) does not imply absolute approx.\ of conditionals/conditional MAP]\label{thm:conditional-familia}
    There exists a family of distributions \(P\) that have \(\epsilon\)-\(D_{\mathsf{TV}}\)- approximators, yet the absolute approximation for conditional marginals and conditional MAP can be arbitrarily large.
\end{theorem}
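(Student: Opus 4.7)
The plan is to exploit how conditioning amplifies small joint-probability discrepancies upon dividing by a tiny evidence probability $P(\mathbf{e})$. For each $\epsilon \in (0,1)$ and each integer $k \geq 1$, I will construct a pair $(P,Q)$ of distributions over Boolean variables $\X = (E, Y_1, \ldots, Y_k)$ with $\Dtv{P}{Q} < \epsilon$ but whose absolute error on a particular conditional marginal query, and on the conditional MAP value, equals $1 - 2^{-k}$, which can be driven arbitrarily close to $1$.

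For the construction, I would fix $0 < \delta < \epsilon$ and define $P$ to place mass $1-\delta$ on the atom $(E=0, \mathbf{Y}=\mathbf{0})$ and mass $\delta$ on the single atom $(E=1, \mathbf{Y}=\mathbf{1})$. I would then define $Q$ to coincide with $P$ on the $E=0$ slice but to spread the remaining mass $\delta$ uniformly over all $2^k$ assignments in the $E=1$ slice. A direct calculation would then give
\[
\Dtv{P}{Q} \;=\; \tfrac{1}{2}\bigl(\,\abs{\delta - 2^{-k}\delta} + (2^k-1)\cdot 2^{-k}\delta\,\bigr) \;=\; \delta(1 - 2^{-k}) \;<\; \epsilon,
\]
so $Q$ is a valid $\epsilon$-$\D_{\mathsf{TV}}$-approximator of $P$, and the family $\{(P_{\delta,k}, Q_{\delta,k})\}$ witnesses the first half of the claim.

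Next I would show that conditioning on $\mathbf{e} = (E=1)$ makes every natural query blow up. Since $P(\mathbf{e}) = Q(\mathbf{e}) = \delta$, the two conditionals are exactly a point mass at $\mathbf{Y}=\mathbf{1}$ under $P$ and the uniform distribution on $\{0,1\}^k$ under $Q$. Therefore $\abs{P(\mathbf{Y}=\mathbf{1}\mid \mathbf{e}) - Q(\mathbf{Y}=\mathbf{1}\mid \mathbf{e})} = 1 - 2^{-k}$ for the conditional marginal, and $\abs{\max_{\mathbf{y}} P(\mathbf{y}\mid \mathbf{e}) - \max_{\mathbf{y}} Q(\mathbf{y}\mid \mathbf{e})} = 1 - 2^{-k}$ for the conditional MAP value; both tend to $1$ as $k \to \infty$ while $\Dtv{P}{Q}$ remains below $\epsilon$.

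The only real obstacle is definitional rather than technical: I need to confirm that ``conditional MAP'' in the statement refers to the maximum conditional probability $\max_{\mathbf{y}} P(\mathbf{y}\mid\mathbf{e})$, in the same spirit as the MAP definition used in Theorem~\ref{thm:ftoMAP}, so that the single construction above simultaneously witnesses failure for both query types. Once this is fixed, no further cleverness is required---the phenomenon is essentially forced, since any joint-probability gap of size $\Theta(\delta)$ visible in $\Dtv{P}{Q}$ gets rescaled by $1/\delta$ under conditioning on an event of probability $\delta$, and taking the $E=1$ slice of $Q$ to be uniform over exponentially many atoms amplifies the MAP-value gap in the same move.
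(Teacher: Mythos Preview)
Your proof is correct and rests on the same mechanism the paper uses: conditioning on evidence of probability $\delta$ rescales joint-level discrepancies by $1/\delta$, so an $O(\delta)$ gap in $\Dtv{P}{Q}$ becomes an $\Omega(1)$ gap at the conditional level. The concrete constructions differ, though. The paper starts from a generic $P$ with $P(\e)<1/k$ and obtains $Q$ by moving mass $k\epsilon P(\e)$ from one atom $\y_1$ onto the conditional mode $\y^*$, so that $Q(\y^*\mid\e)=P(\y^*\mid\e)+k\epsilon$ and the conditional error is exactly $k\epsilon$; you instead build a fully explicit family where $P(\cdot\mid E{=}1)$ is a point mass and $Q(\cdot\mid E{=}1)$ is uniform over $2^k$ atoms, giving error $1-2^{-k}$. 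Your version is more self-contained and avoids the paper's implicit feasibility constraints (e.g., $Q(\y_1,\e)\geq 0$ forces $k\epsilon\leq P(\y_1\mid\e)$), and it handles the conditional-marginal and conditional-MAP cases simultaneously without the paper's extra remark that $\y^*$ coincides for $P$ and $Q$. The paper's argument is slightly more general in that it perturbs an arbitrary $P$ rather than a fixed two-block family, but for the existential claim being proved neither route dominates the other.
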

\begin{proof}
Let \(P\) be a probability distribution over \(\mathbf{X}\) such that $P(\e) < 1/k$ for some assignment $\e$ to $\E\subseteq \X$. Let $\Y = \X \setminus \E$. We construct another distribution $Q$ such that: \(Q(\y^*,\e) = P(\e)P(\y^*\vert \e) + k\epsilon P(\e)\) where \(\y^*\) maximizes \(P(\y\vert \e)\); \(Q(\y_1,\e) = P(\e)P(\y_1\vert \e)  - k\epsilon P(\e)\) for another assignment \(\y_1\); and $Q(\x)=P(\x)$ for all other assignments $\x$. 
Note that $Q$ is normalized by construction. 
Then the total variation distance between $P$ and $Q$ is: $\Dtv{P}{Q} = \frac{1}{2} (\abs{P(\y^*,\e)-Q(\y^*,\e)} + \abs{P(\y_1,\e)-Q(\y_1,\e)}) = k\epsilon P(\e) < \epsilon$.
On the other hand, the absolute approximation error of conditional MAP can grow arbitrarily by increasing $k$: $\abs{\max_\y P(\y \vert \e) - \max_\y Q(\y \vert \e)} = \abs{P(\y^*\vert \e) - P(\y^*\vert \e) - k\epsilon} = k\epsilon$.
Note that the same also holds for absolute approximation of conditionals.
\end{proof}

Moreover, because relative approximation of conditionals imply their absolute approximation~\citep{dagum1993approximating}, bounded \(D_{\mathsf{TV}}\) also does not imply relative approximation of conditionals.
It is well known that absolute and relative approximation of conditionals is $\mathsf{NP}$-hard in Bayesian networks~\citep{dagum1993approximating}. 
Even though approximate modeling with a bounded $D_{\mathsf{TV}}$ is also an $\NP$-hard task, solving it still does not guarantee a polynomial-time algorithm for approximating conditional queries. This highlights a key limitation: while tractability of queries is guaranteed by the structural properties of our learned PCs, some queries do not yield ``good'' approximations for all assignments even after learning within bounded distance.

\section{Conclusion and Discussions}

We established the hardness of approximating distributions with tractable probabilistic models such that the $f$-divergence is small. First, we showed that this task is $\mathsf{NP}$-hard for any model supporting tractable marginal inference, including decomposable PCs. Then, we used the Sauerhoff function to demonstrate an exponential size gap between the class of decomposable PCs and that of deterministic and decomposable PCs when allowing for a bounded approximation error. This proves that the expressive efficiency gap that exists in exact compilation persists even under the relaxed approximation conditions. Finally, we characterized which queries remain well-approximated under the framework of approximate compilation.

These results highlight key challenges in learning compact and expressive PCs while maintaining tractable inference. In light of this, we ask: can a polynomial-time algorithm enable learning an \(\epsilon\)-approximator for a broad family of distributions with a more relaxed \(\epsilon\)? While this is trivial when total variation is near 1, investigating whether structurally constrained PCs remain expressive under weaker approximation could reveal key limits of learnability. Furthermore, does there exist modeling conditions that are sufficient to guarantee relative approximation of various queries? Lastly, we see this work as a first step to encourage further theoretical studies on approximate modeling and inference with guarantees using tractable models. In this paper, we focused on PCs that are tractable for marginal and MAP queries, but there are large classes of tractable models whose efficient approximation power remain largely unknown.

\begin{ack}
    We thank the anonymous reviewers for their feedback towards improving this paper. We also thank Adrian Ciotinga for helpful feedback and discussion. This work was supported in part by the AFOSR grant FA9550-25-1-0320.
\end{ack}

\bibliographystyle{plainnat}
\bibliography{main.bib}

\begin{thebibliography}{50}
\providecommand{\natexlab}[1]{#1}
\providecommand{\url}[1]{\texttt{#1}}
\expandafter\ifx\csname urlstyle\endcsname\relax
  \providecommand{\doi}[1]{doi: #1}\else
  \providecommand{\doi}{doi: \begingroup \urlstyle{rm}\Url}\fi

\bibitem[Borodin(2009)]{borodin2009determinantal}
Alexei Borodin.
\newblock Determinantal point processes.
\newblock \emph{arXiv preprint arXiv:0911.1153}, 2009.

\bibitem[Bova et~al.(2016)Bova, Capelli, Mengel, and Slivovsky]{bova2016knowledge}
Simone Bova, Florent Capelli, Stefan Mengel, and Friedrich Slivovsky.
\newblock Knowledge compilation meets communication complexity.
\newblock In \emph{IJCAI}, volume~16, pages 1008--1014, 2016.

\bibitem[Chakraborty et~al.(2016)Chakraborty, Meel, Mistry, and Vardi]{chakraborty2016approximate}
Supratik Chakraborty, Kuldeep Meel, Rakesh Mistry, and Moshe Vardi.
\newblock Approximate probabilistic inference via word-level counting.
\newblock In \emph{Proceedings of the AAAI Conference on Artificial Intelligence}, volume~30, 2016.

\bibitem[Chan and Darwiche(2002)]{chan2002reasoning}
Hei Chan and Adnan Darwiche.
\newblock Reasoning about bayesian network classifiers.
\newblock In \emph{Proceedings of the Nineteenth conference on Uncertainty in Artificial Intelligence}, pages 107--115, 2002.

\bibitem[Chiang et~al.(2021)Chiang, Greve, Mazdiyasni, Wada, and AghaKouchak]{env}
Felicia Chiang, Peter Greve, Omid Mazdiyasni, Yoshihide Wada, and Amir AghaKouchak.
\newblock A multivariate conditional probability ratio framework for the detection and attribution of compound climate extremes.
\newblock \emph{Geophysical Research Letters}, 48\penalty0 (15):\penalty0 e2021GL094361, 2021.

\bibitem[Choi and Darwiche(2017)]{choi2017relaxing}
Arthur Choi and Adnan Darwiche.
\newblock On relaxing determinism in arithmetic circuits.
\newblock In \emph{International Conference on Machine Learning}, pages 825--833. PMLR, 2017.

\bibitem[Choi et~al.(2017)Choi, Darwiche, and Van~den Broeck]{ChoiIJCAI17}
YooJung Choi, Adnan Darwiche, and Guy Van~den Broeck.
\newblock Optimal feature selection for decision robustness in bayesian networks.
\newblock In \emph{Proceedings of the 26th International Joint Conference on Artificial Intelligence (IJCAI)}, August 2017.

\bibitem[Choi et~al.(2020)Choi, Vergari, and Van~den Broeck]{choi2020probabilistic}
YooJung Choi, Antonio Vergari, and Guy Van~den Broeck.
\newblock Probabilistic circuits: A unifying framework for tractable probabilistic models.
\newblock \emph{UCLA. URL: http://starai. cs. ucla. edu/papers/ProbCirc20. pdf}, page~6, 2020.

\bibitem[Choi et~al.(2021)Choi, Dang, and Van~den Broeck]{ChoiAAAI21}
YooJung Choi, Meihua Dang, and Guy Van~den Broeck.
\newblock Group fairness by probabilistic modeling with latent fair decisions.
\newblock In \emph{Proceedings of the 35th AAAI Conference on Artificial Intelligence}, Feb 2021.

\bibitem[Choi et~al.(2022)Choi, Friedman, and Van~den Broeck]{choi2022solving}
YooJung Choi, Tal Friedman, and Guy Van~den Broeck.
\newblock Solving marginal map exactly by probabilistic circuit transformations.
\newblock In \emph{International Conference on Artificial Intelligence and Statistics}, pages 10196--10208. PMLR, 2022.

\bibitem[Chow and Liu(1968)]{ChowLiu}
C.~Chow and C.~Liu.
\newblock Approximating discrete probability distributions with dependence trees.
\newblock \emph{IEEE Transactions on Information Theory}, 14\penalty0 (3):\penalty0 462--467, 1968.
\newblock \doi{10.1109/TIT.1968.1054142}.

\bibitem[Chubarian and Tur{\'a}n(2020)]{chubarian2020interpretability}
Karine Chubarian and Gy{\"o}rgy Tur{\'a}n.
\newblock Interpretability of bayesian network classifiers: Obdd approximation and polynomial threshold functions.
\newblock In \emph{ISAIM}, 2020.

\bibitem[Cooper(1990)]{cooper1990computational}
Gregory~F Cooper.
\newblock The computational complexity of probabilistic inference using bayesian belief networks.
\newblock \emph{Artificial intelligence}, 42\penalty0 (2-3):\penalty0 393--405, 1990.

\bibitem[Dagum and Luby(1993)]{dagum1993approximating}
Paul Dagum and Michael Luby.
\newblock Approximating probabilistic inference in bayesian belief networks is np-hard.
\newblock \emph{Artificial intelligence}, 60\penalty0 (1):\penalty0 141--153, 1993.

\bibitem[Darwiche(2002)]{darwiche2002logical}
Adnan Darwiche.
\newblock A logical approach to factoring belief networks.
\newblock \emph{KR}, 2:\penalty0 409--420, 2002.

\bibitem[Darwiche(2003{\natexlab{a}})]{darwiche2003differential}
Adnan Darwiche.
\newblock A differential approach to inference in bayesian networks.
\newblock \emph{Journal of the ACM (JACM)}, 50\penalty0 (3):\penalty0 280--305, 2003{\natexlab{a}}.

\bibitem[Darwiche(2003{\natexlab{b}})]{darwicheACs}
Adnan Darwiche.
\newblock A differential approach to inference in bayesian networks.
\newblock \emph{J. ACM}, 50\penalty0 (3):\penalty0 280–305, May 2003{\natexlab{b}}.
\newblock ISSN 0004-5411.
\newblock \doi{10.1145/765568.765570}.
\newblock URL \url{https://doi.org/10.1145/765568.765570}.

\bibitem[Darwiche(2008)]{darwiche2008bayesian}
Adnan Darwiche.
\newblock Bayesian networks.
\newblock \emph{Foundations of Artificial Intelligence}, 3:\penalty0 467--509, 2008.

\bibitem[Darwiche and Marquis(2002)]{darwiche2002knowledge}
Adnan Darwiche and Pierre Marquis.
\newblock A knowledge compilation map.
\newblock \emph{Journal of Artificial Intelligence Research}, 17:\penalty0 229--264, 2002.

\bibitem[de~Colnet and Mengel(2021)]{de2021compilation}
Alexis de~Colnet and Stefan Mengel.
\newblock A compilation of succinctness results for arithmetic circuits.
\newblock In \emph{Proceedings of the International Conference on Principles of Knowledge Representation and Reasoning}, volume~18, pages 205--215, 2021.

\bibitem[De~Colnet and Mengel(2021)]{de2021lower}
Alexis De~Colnet and Stefan Mengel.
\newblock Lower bounds for approximate knowledge compilation.
\newblock In \emph{Proceedings of the Twenty-Ninth International Conference on International Joint Conferences on Artificial Intelligence}, pages 1834--1840, 2021.

\bibitem[Dos~Martires(2025)]{dosquantum}
Pedro~Zuidberg Dos~Martires.
\newblock A quantum information theoretic approach to tractable probabilistic models.
\newblock In \emph{The 41st Conference on Uncertainty in Artificial Intelligence}, 2025.

\bibitem[Dubray et~al.(2024)Dubray, Schaus, and Nijssen]{dubray2024anytimeWMC}
Alexandre Dubray, Pierre Schaus, and Siegfried Nijssen.
\newblock Anytime weighted model counting with approximation guarantees for probabilistic inference.
\newblock In \emph{30th International Conference on Principles and Practice of Constraint Programming (CP 2024)}, pages 10--1. Schloss Dagstuhl--Leibniz-Zentrum f{\"u}r Informatik, 2024.

\bibitem[Elidan and Gould(2008)]{elidan2008learning}
Gal Elidan and Stephen Gould.
\newblock Learning bounded treewidth bayesian networks.
\newblock \emph{Advances in neural information processing systems}, 21, 2008.

\bibitem[Fierens et~al.(2015)Fierens, Van~den Broeck, Renkens, Shterionov, Gutmann, Thon, Janssens, and De~Raedt]{fierens2015inference}
Daan Fierens, Guy Van~den Broeck, Joris Renkens, Dimitar Shterionov, Bernd Gutmann, Ingo Thon, Gerda Janssens, and Luc De~Raedt.
\newblock Inference and learning in probabilistic logic programs using weighted boolean formulas.
\newblock \emph{Theory and Practice of Logic Programming}, 15\penalty0 (3):\penalty0 358--401, 2015.

\bibitem[Goodfellow et~al.(2020)Goodfellow, Pouget-Abadie, Mirza, Xu, Warde-Farley, Ozair, Courville, and Bengio]{goodfellow2014generativeadversarialnetworks}
Ian Goodfellow, Jean Pouget-Abadie, Mehdi Mirza, Bing Xu, David Warde-Farley, Sherjil Ozair, Aaron Courville, and Yoshua Bengio.
\newblock Generative adversarial networks.
\newblock \emph{Commun. ACM}, 63\penalty0 (11):\penalty0 139–144, October 2020.
\newblock ISSN 0001-0782.
\newblock \doi{10.1145/3422622}.
\newblock URL \url{https://doi.org/10.1145/3422622}.

\bibitem[Ho et~al.(2020)Ho, Jain, and Abbeel]{ho2020denoisingdiffusionprobabilisticmodels}
Jonathan Ho, Ajay Jain, and Pieter Abbeel.
\newblock Denoising diffusion probabilistic models.
\newblock \emph{Advances in neural information processing systems}, 33:\penalty0 6840--6851, 2020.

\bibitem[Holtzen et~al.(2020)Holtzen, Van~den Broeck, and Millstein]{holtzen2020scaling}
Steven Holtzen, Guy Van~den Broeck, and Todd Millstein.
\newblock Scaling exact inference for discrete probabilistic programs.
\newblock \emph{Proceedings of the ACM on Programming Languages}, 4\penalty0 (OOPSLA):\penalty0 1--31, 2020.

\bibitem[Kingma and Welling(2022)]{kingma2022auto}
Diederik~P Kingma and Max Welling.
\newblock Auto-encoding variational bayes, 2022.
\newblock URL \url{https://arxiv.org/abs/1312.6114}.

\bibitem[Kisa et~al.(2014)Kisa, Van~den Broeck, Choi, and Darwiche]{kisa2014probabilistic}
Doga Kisa, Guy Van~den Broeck, Arthur Choi, and Adnan Darwiche.
\newblock Probabilistic sentential decision diagrams.
\newblock In \emph{KR}, 2014.

\bibitem[Loconte et~al.(2025)Loconte, Mengel, and Vergari]{loconte2025sum}
Lorenzo Loconte, Stefan Mengel, and Antonio Vergari.
\newblock Sum of squares circuits.
\newblock In \emph{Proceedings of the AAAI Conference on Artificial Intelligence}, volume~39, pages 19077--19085, 2025.

\bibitem[Loeliger(2004)]{factor_graphs}
H.-A. Loeliger.
\newblock An introduction to factor graphs.
\newblock \emph{IEEE Signal Processing Magazine}, 21\penalty0 (1):\penalty0 28--41, 2004.
\newblock \doi{10.1109/MSP.2004.1267047}.

\bibitem[Martens and Medabalimi(2014)]{martens2014expressive}
James Martens and Venkatesh Medabalimi.
\newblock On the expressive efficiency of sum product networks.
\newblock \emph{arXiv preprint arXiv:1411.7717}, 2014.

\bibitem[Melbourne(2020)]{melbourne2020strongly}
James Melbourne.
\newblock Strongly convex divergences.
\newblock \emph{Entropy}, 22\penalty0 (11):\penalty0 1327, 2020.

\bibitem[Papamakarios et~al.(2021)Papamakarios, Nalisnick, Rezende, Mohamed, and Lakshminarayanan]{papamakarios2021normalizing}
George Papamakarios, Eric Nalisnick, Danilo~Jimenez Rezende, Shakir Mohamed, and Balaji Lakshminarayanan.
\newblock Normalizing flows for probabilistic modeling and inference.
\newblock \emph{Journal of Machine Learning Research}, 22\penalty0 (57):\penalty0 1--64, 2021.

\bibitem[Polyanskiy(2014)]{polyanskiy_fdivergence}
Yury Polyanskiy.
\newblock Lecture notes on f-divergence, 2014.
\newblock URL \url{https://people.lids.mit.edu/yp/homepage/data/LN_fdiv.pdf}.
\newblock Accessed: 2025-02-10.

\bibitem[Poon and Domingos(2011)]{poon2011sum}
Hoifung Poon and Pedro Domingos.
\newblock Sum-product networks: A new deep architecture.
\newblock In \emph{2011 IEEE International Conference on Computer Vision Workshops (ICCV Workshops)}, pages 689--690. IEEE, 2011.

\bibitem[Rahman et~al.(2014)Rahman, Kothalkar, and Gogate]{rahman2014cutset}
Tahrima Rahman, Prasanna Kothalkar, and Vibhav Gogate.
\newblock Cutset networks: A simple, tractable, and scalable approach for improving the accuracy of chow-liu trees.
\newblock In \emph{Machine Learning and Knowledge Discovery in Databases: European Conference, ECML PKDD 2014, Nancy, France, September 15-19, 2014. Proceedings, Part II 14}, pages 630--645. Springer, 2014.

\bibitem[Roth(1996)]{roth1996hardness}
Dan Roth.
\newblock On the hardness of approximate reasoning.
\newblock \emph{Artificial intelligence}, 82\penalty0 (1-2):\penalty0 273--302, 1996.

\bibitem[Sanfilippo et~al.(2017)Sanfilippo, Hewitt, and Mackey]{conditionalsHealth}
Paul~G Sanfilippo, Alex~W Hewitt, and David~A Mackey.
\newblock The importance of conditional probability in diagnostic reasoning and clinical decision making: A primer for the eye care practitioner.
\newblock \emph{Ophthalmic epidemiology}, 24\penalty0 (2):\penalty0 81--89, 2017.

\bibitem[Sauerhoff(2003)]{sauerhoff2003approximation}
Martin Sauerhoff.
\newblock Approximation of boolean functions by combinatorial rectangles.
\newblock \emph{Theoretical computer science}, 301\penalty0 (1-3):\penalty0 45--78, 2003.

\bibitem[Shimony(1994)]{shimony1994finding}
Solomon~Eyal Shimony.
\newblock Finding maps for belief networks is np-hard.
\newblock \emph{Artificial intelligence}, 68\penalty0 (2):\penalty0 399--410, 1994.

\bibitem[Skryagin et~al.(2024)Skryagin, Ochs, Dhami, and Kersting]{skryagin2023scalableneuralprobabilisticanswerset}
Arseny Skryagin, Daniel Ochs, Devendra~Singh Dhami, and Kristian Kersting.
\newblock Scalable neural-probabilistic answer set programming.
\newblock \emph{J. Artif. Int. Res.}, 78, January 2024.
\newblock ISSN 1076-9757.
\newblock \doi{10.1613/jair.1.15027}.
\newblock URL \url{https://doi.org/10.1613/jair.1.15027}.

\bibitem[Teh et~al.(2003)Teh, Welling, Osindero, and Hinton]{EBMS}
Yee~Whye Teh, Max Welling, Simon Osindero, and Geoffrey~E Hinton.
\newblock Energy-based models for sparse overcomplete representations.
\newblock \emph{Journal of Machine Learning Research}, 4\penalty0 (Dec):\penalty0 1235--1260, 2003.

\bibitem[Tsybakov(2009)]{pinsker}
Alexandre~B Tsybakov.
\newblock \emph{{Introduction to Nonparametric Estimation}}.
\newblock Springer series in statistics. Springer, 2009.
\newblock \doi{10.1007/b13794}.

\bibitem[Vaswani et~al.(2017)Vaswani, Shazeer, Parmar, Uszkoreit, Jones, Gomez, Kaiser, and Polosukhin]{vaswani2023attentionneed}
Ashish Vaswani, Noam Shazeer, Niki Parmar, Jakob Uszkoreit, Llion Jones, Aidan~N Gomez, {\L}ukasz Kaiser, and Illia Polosukhin.
\newblock Attention is all you need.
\newblock \emph{Advances in neural information processing systems}, 30, 2017.

\bibitem[Wang and Van~den Broeck(2025)]{wang2025relationship}
Benjie Wang and Guy Van~den Broeck.
\newblock On the relationship between monotone and squared probabilistic circuits.
\newblock In \emph{Proceedings of the AAAI Conference on Artificial Intelligence}, volume~39, pages 21026--21034, 2025.

\bibitem[Yin and Zhao(2024)]{yin2024on}
Lang Yin and Han Zhao.
\newblock On the expressive power of tree-structured probabilistic circuits.
\newblock In \emph{The Thirty-eighth Annual Conference on Neural Information Processing Systems}, 2024.
\newblock URL \url{https://openreview.net/forum?id=suYAAOI5bd}.

\bibitem[Zhang et~al.(2020)Zhang, Holtzen, and Broeck]{zhang2020relationship}
Honghua Zhang, Steven Holtzen, and Guy Broeck.
\newblock On the relationship between probabilistic circuits and determinantal point processes.
\newblock In \emph{Conference on Uncertainty in Artificial Intelligence}, pages 1188--1197. PMLR, 2020.

\bibitem[Zhang et~al.(2021)Zhang, Juba, and Van~den Broeck]{zhang2021probabilistic}
Honghua Zhang, Brendan Juba, and Guy Van~den Broeck.
\newblock Probabilistic generating circuits.
\newblock In \emph{International Conference on Machine Learning}, pages 12447--12457. PMLR, 2021.

\end{thebibliography}


\clearpage
\appendix

\section{Appendix}
\subsection[k-convex f-divergences]{$k$-convex $f$-divergences}\label{table:k}
\begin{table}[h]
    \caption{Examples of $k$-convex $f$-divergences~\citep{melbourne2020strongly}}
    \label{tab:convex_divergences}
    \centering
    \renewcommand{\arraystretch}{1.5}
    \scalebox{0.95}{
    \begin{tabular}{|l|c|c|c|}
        \hline
        \textbf{Divergence} & $f$ & $k$ & \textbf{Domain} \\
        \hline
        Kullback-Leibler & $t \log t$ & $\frac{1}{M}$ & $(0,M]$ \\
        \hline
        Total variation & $\frac{|t - 1|}{2}$ & $0$ & $(0, \infty)$ \\
        \hline
        Pearson’s $\chi^2$ & $(t - 1)^2$ & $2$ & $(0, \infty)$ \\
        \hline
        Squared Hellinger & $2(1 - \sqrt{t})$ & $M^{-\frac{3}{2}} / 2$ & $(0,M]$ \\
        \hline
        Reverse KL & $-\log t$ & $\frac{1}{M^2}$ & $(0,M]$ \\
        \hline
        Vincze-Le Cam & $\frac{(t - 1)^2}{t + 1}$ & $\frac{8}{(M+1)^3}$ & $(0,M]$ \\
        \hline
        Jensen–Shannon & $(t + 1) \log \frac{2}{t + 1} + t \log t$ & $\frac{1}{M(M+1)}$ & $(0,M]$ \\
        \hline
        Neyman’s $\chi^2$ & $\frac{1}{t} - 1$ & $\frac{2}{M^3}$ & $(0,M]$ \\
        \hline
        Sason’s $s$ & $\log (s + t)^{(s+t)^2} - \log (s + 1)^{(s+1)^2}$ & $2 \log(s+M) + 3$ & $[M, \infty), s > e^{-3/2}$ \\
        \hline
        $\alpha$-divergence & $\frac{4 \left( 1 - t^{\frac{1+\alpha}{2}} \right)}{1 - \alpha^2}, \quad \alpha \neq \pm 1$ & $M^{\frac{\alpha - 3}{2}}$ & 
        $\begin{cases} 
            [M, \infty), & \alpha > 3 \\
            (0, M], & \alpha < 3
        \end{cases}$ \\
        \hline
    \end{tabular}}
\end{table}

\subsection{Rectangle Partitions}\label{appx:rectangle}
Rectangle partitions are a powerful tool used in communication complexity to analyze the size of communication protocols. The main idea is to represent the communication protocol for a function $f: \{0,1\}^n \to \{0,1\}$ into a $2^n \times 2^n$ matrix $M_f$ where $M_f[\x,\y] = f(\x,\y)$, then partition $M_f$ into a set of monochromatic rectangles which cover the input space of all possible pairs. Here, monochromatic means that a given rectangle covers only the outputs equal to 0 or 1, but not both.This allows us to derive lower bounds on the communication complexity of a function $f$. Furthermore, \citet{bova2016knowledge} showed the relation between rectangle covers and partitions to the size of DNNF and d-DNNF formulas. For all definitions below, assume that $\X$ is a finite set of variables. 

We begin by describing \textit{partitions} of $\X$, corresponding to our partition of $M_f$.
\begin{definition}[Partition \citep{bova2016knowledge}]
    A partition of $\X$ is a sequence of pairwise disjoint subsets $\X_i$ of $\X$ such that $\bigcup_{i} \X_i = \X$. A partition of two blocks $(\X_1,\X_2) $ is \textit{balanced} if $\abs{\X}/3 \leq \min(\abs{\X_1},\abs{\X_2})$. 
\end{definition}
We can now define \textit{rectangles}:
\begin{definition}
    [Combinatorial Rectangle \citep{bova2016knowledge}]
    A rectangle over $\X$ is a function $r: \{0,1\}^{\abs{\X}} \to \{0,1\}$ such that there exists and underlying partition of $\X$, called $(\X_1,\X_2)$ and functions $r_i:\{0,1\}^{\abs{\X}} \to \{0,1\}$ for $i = 1,2$ such that $\sat{r} = \sat{r_1} \times \sat{r_2}$. A rectangle is \textit{balanced} if the underlying partition is balanced.
\end{definition}
Combining many of these rectangles together allows us to cover $M_f$, effectively covering the function $f$. The size of these covers provides lower bounds on the communication complexity of $f$.
\begin{definition}
    [Rectangular Cover \citep{bova2016knowledge}]
    Let $f: \{0,1\}^{\abs{\X}} \to \{0,1\}$ be a function. A finite set of rectangles $\{r_i\}$ over $\X$ is called a \textit{rectangle cover} if 
    \[
        \sat{f} = \bigcup_i \sat{r_i}.
    \]
    The rectangle cover is referred to as a \textit{rectangle partition} if the above union is disjoint. A rectangle cover is \textit{balanced} if each rectangle in the cover is balanced.
\end{definition}

To understand how these rectangle partitions relate to d-DNNFs and DNNFs,
we utilize the following notions of \textit{certificates} and \textit{elimination}.
\begin{definition}
    [Certificate \citep{bova2016knowledge}]
    Let $\C$ be a DNNF on $\X$. A \textit{certificate of $\C$} is a DNNF $T$ on $\X$ such that: $T$ contains the output gate of $\C$; if $T$ contains an $\wedge$-gate, $v$, of $\C$ then $T$ also contains every gate of $\C$ having an output wire to $v$; if $T$ contains an $\vee$-gate of $\C$, then $T$ also contains exactly one gate of $\C$ having an output wire to $v$. The output gate of $T$ coincides with the output gate of $\C$, and the gates of $T$ inherit their labels and wires from $\C$. We let $cert(\C)$ denote the certificates of $\C$.
\end{definition}
See from the above definition that 
\[
    \sat{\C} = \bigcup_{T \in cert(\C)}\sat{T}.
\]
This is useful tool in relation to rectangle partitions due to the fact that given a DNNF $\C$, $T \in cert(\C)$ and gate $g$, then \(\sat{\C_g} = \bigcup_{T \in cert(\C_g)} \sat{T}\) where $\C_g$ represents the sub-circuit $\C$ rooted at gate $g$. Then we can represent $\sat{\C_g}$ as a rectangle which separates the variables in the sub-circuit $\C$ rooted at $g$. Using this in conjunction with the \textit{elimination} operation gives us the ability to compute the size of our circuit using rectangles. 
\begin{definition}
    [Elimination \citep{bova2016knowledge}]
    Let $\C$ be a DNNF and $g$ be a non-input gate. Then, 
    \begin{equation*}
        \sat{\C_{\neg g}} = \bigcup_{T \in cert(\C)\setminus cert(\C_g)} \sat{T}
    \end{equation*}
\end{definition}
In the case of a d-DNNF, by determinism we can write $\sat{\C{\neg g}} = \sat{\C} \setminus \sat{\C_g}.$

Next we provide a short description on the relationship between the size of rectangle covers and Boolean circuits; for the full detailed proofs see \citep{bova2016knowledge}. 
Effectively, start with a d-DNNF $\C$ over variables $\X$ which computes a function $f$. Then, construct $\C^{i+1} = \C^i_{\neg g_i}$ by eliminating $g_i \in \C^i$ until we hit $l \leq \abs{\C}$ such that $\C^l \equiv 0$. It can be shown that $R_i = \sat{\C^i_{g_i}}$ is a balanced rectangle over $\X$. The set $\{R_i\vert i = 0, \ldots, l-1\}$ is then a balanced rectangle partition of $\C$ since $\sat{\C^{i+1}_{\neg g_i}} = \emptyset$. 
Therefore, we can represent the size of d-DNNFs representing functions as the size of a balanced rectangle partition over said function. This implies that an exponential size rectangle partition implies exponentially large d-DNNF.

\subsection{Complete Proofs}

\subsubsection{Distribution Construction for Proposition~\ref{thm:kFd-familia}}\label{proofs:FamilyConstruct}
\begin{proof}
Let $A$ be an event such that $P(A)=\delta$ and for some $K>0$,
\begin{equation*}
    Q(\x) = \frac{P(\x)}{K}, \;\forall \x \in A.
\end{equation*}
Then, let $Q(\x) = \lambda P(\x), \;\forall \x \in A^c$ for some constant $\lambda$.  To ensure that $Q$ is normalized, see that we must have $\sum_{\x}Q(\x) = \sum_{\x \in A} Q(\x) + \sum_{\x \in A^c} Q(\x) = 1$. Therefore, we must have:
\begin{align*}
    1 &= \sum_{\x \in A} Q(\x) + \sum_{\x \in A^c} Q(\x)
     = \sum_{\x \in A} \frac{P(\x)}{K} + \sum_{\x \in A^c} \lambda P(\x)
     = \frac{\delta}{K} + \lambda (1 - \sum_{\x \in A} P(\x)) \\
     &=\frac{\delta}{K} + \lambda(1-\delta)
\end{align*}
Hence $\lambda = \frac{1 - \delta/K}{1-\delta}$ and thus we can define 
\begin{equation*}
    Q(\x) = \begin{cases}
        \frac{P(\x)}{K}, & \x\in A\\
        \frac{1 - \delta/K}{1-\delta} P(\x), & \x \in A^c
    \end{cases}
\end{equation*}
Therefore, we just need to check then that the $f$-divergence between $P$ and $Q$ must be bounded. 

\begin{align*}
    \sum_\x Q(\x)f\left(\frac{P(\x)}{Q(\x)}\right) &= \sum_A  Q(\x)f\left(\frac{P(\x)}{Q(\x)}\right) + \sum_{A^c} Q(\x)f\left(\frac{P(\x)}{Q(\x)}\right)\\
    &= \sum_A \frac{P(\x)}{K}f\left(\frac{P(\x)}{P(\x)/K}\right) + \sum_{A^c} \frac{1-\delta/K}{1-\delta}P(\x)f\left(\frac{1-\delta}{1-\delta/K}\right)\\
    &= \frac{f(K)}{K}\sum_{A}P(\x) + \frac{1-\delta/K}{1-\delta}f\left(\frac{1-\delta}{1-\delta/K}\right) \sum_{A^c}P(\x)\\
    &= \frac{\delta f(K)}{K} + \frac{1-\delta/K}{1-\delta}f\left(\frac{1-\delta}{1-\delta/K}\right) (1-\delta)\\
    &= \frac{\delta f(K)}{K} + \left({1-\frac{\delta}{K}}\right) f\left(\frac{1-\delta}{1-\delta/K}\right)
\end{align*}
See that as $\delta \to 0$, the above approaches $0 + f(1)$.

By definition of $f$-divergence, $f(1) = 0$. Thus, the $f$-divergence---including the total variation distance---between $P$ and $Q$ can be very small, approaching 0, while the relative approximation error stays at a constant factor $K$.
\end{proof}

\subsubsection{Pruning Deterministic PCs for the Proof of Proposition~\ref{thm:weakappxnonunif}}\label{appx:prune}

Suppose that $Q$ is a deterministic, decomposable and smooth probabilistic circuit.
Given $Q$, wish to prune its edges such that in the resulting (unnormalized) PC $Q'$, $\x$ is in the support of $Q'$ if and only if {$Q(\x) < \frac{1}{2^{n+1}}$}. 
We describe our pruning algorithm below.

First, we collect the an upper bound on each edge $(n,c)$ that is the largest probability obtainable by any assignment $\x$ that uses that edge (propagates non-zero value through the edge in the forward pass for $Q(\x)$). We denote this $EB(n,c)$, which stands for the Edge-Bound. This can be done in linear time in the size of the circuit using the Edge-Bounds algorithm~\citep{choi2022solving}. 
This allows us to safely prune any edge whose Edge-Bound falls below a given threshold; i.e., prune edge $(n,c)$ if $EB(n,c) < \frac{1}{2^{n+1}}$.

Note that pruning some edges may cause the edge bounds for remaining edges to be tightened. Thus, we will repeat this process until all $Q(\x) < \frac{1}{2^{n+1}}$ are pruned away. Upon completion of this process, we return back the new pruned circuit $Q'$.

We know that this algorithm halts as there can only be a finite number of $\x$ such that $Q(\x) < \frac{1}{2^{n+1}}$. Moreover, given that we have only deleted edges from a $Q$, our circuit $Q'$ is still a deterministic, decomposable and smooth probabilistic circuit and has size polynomial in the size of $Q$. We are also assured by determinism that if we prune a path $Q(\x)$, there exists no other path that can evaluate $Q(\x)$ \citep{choi2017relaxing}; thus all $Q(\x) < \frac{1}{2^{n+1}}$ are deleted. Furthermore, by the property that there is only one accepting path per assignment $\x$, we know that we do not unintentionally delete any $Q(\x) \geq \frac{1}{2^{n+1}}.$

\subsubsection{Connecting Total Variation Distance and Weak Approximation for the Proof of Proposition~\ref{thm:weakappxnonunif}} \label{appx:TVDtoWeak}
Suppose that $P$ is a uniform distribution over the support given by a Boolean function $f$, and $Q$ a probability distributions over the support given by a Boolean function $h$. We look to analyze the total variation distance with respect to support $g$, which is taken from $Q'$.
\begin{gather*}
        \Dtv{P}{Q} = \frac{1}{2} \left( \sum_{x \models f \wedge h} \abs{P(\x) - Q(\x)} + \sum_{\x \models f \wedge \neg h} P(\x) + \sum_{\x \models \neg f \wedge h} Q(\x) \right) < \epsilon\\
        \implies \sum_{x \models f \wedge h} \abs{P(\x) - Q(\x)} + \sum_{\x \models f \wedge \neg h} P(\x) + \sum_{\x \models \neg f \wedge h} Q(\x) < 2\epsilon
    \end{gather*}
    We partition $h$ into the disjoint sets $g$ and $\cut$ (note that every model of $g$ is already a model of $h$).
    \begin{align*}
        \sum_{\x \models f \wedge g}\abs{\frac{1}{\MC(f)} - Q(\x)} + \sum_{\x \models f \wedge (\cut)}\abs{\frac{1}{\MC(f)} - Q(\x)} + \frac{\MC(f\wedge \neg h)}{\MC(f)} + \sum_{\x \models \neg f \wedge h} Q(\x)&< 2\epsilon
    \end{align*}
    
    The LHS of above inequality is again lower bounded by:
    \begin{align*}
        &\sum_{\x \models f \wedge (\cut)}\abs{\frac{1}{\MC(f)} - Q(\x)} + \frac{\MC(f\wedge \neg h)}{\MC(f)} + \sum_{\x \models \neg f \wedge h} Q(\x)\\
        >& \sum_{\x \models f \wedge (\cut)}\abs{\frac{1}{2^n} - \frac{1}{2^{n+1}}} + \frac{\MC(f\wedge \neg h)}{\MC(f)} + \sum_{\x \models \neg f \wedge g} Q(\x)\\
       >& \sum_{\x \models f \wedge (\cut)}\abs{\frac{1}{2^n} - \frac{1}{2^{n+1}}} + \frac{\MC(f\wedge \neg h)}{\MC(f)} + \frac{\MC(\neg f\wedge g)}{2^{n+1}}
    \end{align*}
    as $\MC(f) \leq 2^n$ and $Q(\x) < 1/2^{n+1}$ for every $\x \models \neg g \wedge h$. 
    Thus, 
    \begin{align*}
        2\epsilon &> \sum_{\x \models f \wedge (\cut)}\abs{\frac{1}{2^n} - \frac{1}{2^{n+1}}} + \frac{\MC(f\wedge \neg h)}{\MC(f)} + \frac{\MC(\neg f\wedge g)}{2^{n+1}} \\
        &> \frac{\MC(f \wedge (\cut))}{2^{n+1}} + \frac{\MC(f\wedge \neg h)}{\MC(f)} + \frac{\MC(\neg f\wedge g)}{2^{n+1}} \\
        &> \frac{\MC(f \wedge (\cut))}{2^{n+1}} + \frac{\MC(f\wedge \neg h)}{2^{n+1}} + \frac{\MC(\neg f\wedge g)}{2^{n+1}} = \frac{\MC(f\wedge \neg g)+\MC(\neg f\wedge g)}{2^{n+1}},  
    \end{align*} 
    implying $\MC(f \wedge \neg g) + \MC(\neg f \wedge g) < 2^{n+2}\epsilon = 2^n (4\epsilon)$. Note that the last equality above is due to $\neg h$ implying $\neg g$.

\newpage
\section*{NeurIPS Paper Checklist}

\begin{enumerate}

\item {\bf Claims}
    \item[] Question: Do the main claims made in the abstract and introduction accurately reflect the paper's contributions and scope?
    \item[] Answer: \answerYes{} 
    \item[] Justification: We provide theorems and results for all claims made in the abstract and introduction. Specifically, each section builds a result described in the abstract and introduction. 
    \item[] Guidelines:
    \begin{itemize}
        \item The answer NA means that the abstract and introduction do not include the claims made in the paper.
        \item The abstract and/or introduction should clearly state the claims made, including the contributions made in the paper and important assumptions and limitations. A No or NA answer to this question will not be perceived well by the reviewers. 
        \item The claims made should match theoretical and experimental results, and reflect how much the results can be expected to generalize to other settings. 
        \item It is fine to include aspirational goals as motivation as long as it is clear that these goals are not attained by the paper. 
    \end{itemize}

\item {\bf Limitations}
    \item[] Question: Does the paper discuss the limitations of the work performed by the authors?
    \item[] Answer: \answerYes{} 
    \item[] Justification: We discuss all assumptions related to the hardness results provided. As well as the restriction to the form of $f$-divergence required for our results. 
    \item[] Guidelines:
    \begin{itemize}
        \item The answer NA means that the paper has no limitation while the answer No means that the paper has limitations, but those are not discussed in the paper. 
        \item The authors are encouraged to create a separate "Limitations" section in their paper.
        \item The paper should point out any strong assumptions and how robust the results are to violations of these assumptions (e.g., independence assumptions, noiseless settings, model well-specification, asymptotic approximations only holding locally). The authors should reflect on how these assumptions might be violated in practice and what the implications would be.
        \item The authors should reflect on the scope of the claims made, e.g., if the approach was only tested on a few datasets or with a few runs. In general, empirical results often depend on implicit assumptions, which should be articulated.
        \item The authors should reflect on the factors that influence the performance of the approach. For example, a facial recognition algorithm may perform poorly when image resolution is low or images are taken in low lighting. Or a speech-to-text system might not be used reliably to provide closed captions for online lectures because it fails to handle technical jargon.
        \item The authors should discuss the computational efficiency of the proposed algorithms and how they scale with dataset size.
        \item If applicable, the authors should discuss possible limitations of their approach to address problems of privacy and fairness.
        \item While the authors might fear that complete honesty about limitations might be used by reviewers as grounds for rejection, a worse outcome might be that reviewers discover limitations that aren't acknowledged in the paper. The authors should use their best judgment and recognize that individual actions in favor of transparency play an important role in developing norms that preserve the integrity of the community. Reviewers will be specifically instructed to not penalize honesty concerning limitations.
    \end{itemize}

\item {\bf Theory assumptions and proofs}
    \item[] Question: For each theoretical result, does the paper provide the full set of assumptions and a complete (and correct) proof?
    \item[] Answer: \answerYes{} 
    \item[] Justification: We provide both proof sketches and full proofs in the appendix of our paper. The assumptions are clearly stated for each theoretical result. 
    \item[] Guidelines:
    \begin{itemize}
        \item The answer NA means that the paper does not include theoretical results. 
        \item All the theorems, formulas, and proofs in the paper should be numbered and cross-referenced.
        \item All assumptions should be clearly stated or referenced in the statement of any theorems.
        \item The proofs can either appear in the main paper or the supplemental material, but if they appear in the supplemental material, the authors are encouraged to provide a short proof sketch to provide intuition. 
        \item Inversely, any informal proof provided in the core of the paper should be complemented by formal proofs provided in appendix or supplemental material.
        \item Theorems and Lemmas that the proof relies upon should be properly referenced. 
    \end{itemize}

    \item {\bf Experimental result reproducibility}
    \item[] Question: Does the paper fully disclose all the information needed to reproduce the main experimental results of the paper to the extent that it affects the main claims and/or conclusions of the paper (regardless of whether the code and data are provided or not)?
    \item[] Answer: \answerNA{} 
    \item[] Justification: This paper does not include experiments.
    \item[] Guidelines:
    \begin{itemize}
        \item The answer NA means that the paper does not include experiments.
        \item If the paper includes experiments, a No answer to this question will not be perceived well by the reviewers: Making the paper reproducible is important, regardless of whether the code and data are provided or not.
        \item If the contribution is a dataset and/or model, the authors should describe the steps taken to make their results reproducible or verifiable. 
        \item Depending on the contribution, reproducibility can be accomplished in various ways. For example, if the contribution is a novel architecture, describing the architecture fully might suffice, or if the contribution is a specific model and empirical evaluation, it may be necessary to either make it possible for others to replicate the model with the same dataset, or provide access to the model. In general. releasing code and data is often one good way to accomplish this, but reproducibility can also be provided via detailed instructions for how to replicate the results, access to a hosted model (e.g., in the case of a large language model), releasing of a model checkpoint, or other means that are appropriate to the research performed.
        \item While NeurIPS does not require releasing code, the conference does require all submissions to provide some reasonable avenue for reproducibility, which may depend on the nature of the contribution. For example
        \begin{enumerate}
            \item If the contribution is primarily a new algorithm, the paper should make it clear how to reproduce that algorithm.
            \item If the contribution is primarily a new model architecture, the paper should describe the architecture clearly and fully.
            \item If the contribution is a new model (e.g., a large language model), then there should either be a way to access this model for reproducing the results or a way to reproduce the model (e.g., with an open-source dataset or instructions for how to construct the dataset).
            \item We recognize that reproducibility may be tricky in some cases, in which case authors are welcome to describe the particular way they provide for reproducibility. In the case of closed-source models, it may be that access to the model is limited in some way (e.g., to registered users), but it should be possible for other researchers to have some path to reproducing or verifying the results.
        \end{enumerate}
    \end{itemize}

\item {\bf Open access to data and code}
    \item[] Question: Does the paper provide open access to the data and code, with sufficient instructions to faithfully reproduce the main experimental results, as described in supplemental material?
    \item[] Answer: \answerNA{} 
    \item[] Justification: This paper does not include experiments requiring code.
    \item[] Guidelines:
    \begin{itemize}
        \item The answer NA means that paper does not include experiments requiring code.
        \item Please see the NeurIPS code and data submission guidelines (\url{https://nips.cc/public/guides/CodeSubmissionPolicy}) for more details.
        \item While we encourage the release of code and data, we understand that this might not be possible, so “No” is an acceptable answer. Papers cannot be rejected simply for not including code, unless this is central to the contribution (e.g., for a new open-source benchmark).
        \item The instructions should contain the exact command and environment needed to run to reproduce the results. See the NeurIPS code and data submission guidelines (\url{https://nips.cc/public/guides/CodeSubmissionPolicy}) for more details.
        \item The authors should provide instructions on data access and preparation, including how to access the raw data, preprocessed data, intermediate data, and generated data, etc.
        \item The authors should provide scripts to reproduce all experimental results for the new proposed method and baselines. If only a subset of experiments are reproducible, they should state which ones are omitted from the script and why.
        \item At submission time, to preserve anonymity, the authors should release anonymized versions (if applicable).
        \item Providing as much information as possible in supplemental material (appended to the paper) is recommended, but including URLs to data and code is permitted.
    \end{itemize}

\item {\bf Experimental setting/details}
    \item[] Question: Does the paper specify all the training and test details (e.g., data splits, hyperparameters, how they were chosen, type of optimizer, etc.) necessary to understand the results?
    \item[] Answer: \answerNA{} 
    \item[] Justification: This paper does not include experiments.
    \item[] Guidelines:
    \begin{itemize}
        \item The answer NA means that the paper does not include experiments.
        \item The experimental setting should be presented in the core of the paper to a level of detail that is necessary to appreciate the results and make sense of them.
        \item The full details can be provided either with the code, in appendix, or as supplemental material.
    \end{itemize}

\item {\bf Experiment statistical significance}
    \item[] Question: Does the paper report error bars suitably and correctly defined or other appropriate information about the statistical significance of the experiments?
    \item[] Answer: \answerNA{} 
    \item[] Justification: This paper does not include experiments.
    \item[] Guidelines:
    \begin{itemize}
        \item The answer NA means that the paper does not include experiments.
        \item The authors should answer "Yes" if the results are accompanied by error bars, confidence intervals, or statistical significance tests, at least for the experiments that support the main claims of the paper.
        \item The factors of variability that the error bars are capturing should be clearly stated (for example, train/test split, initialization, random drawing of some parameter, or overall run with given experimental conditions).
        \item The method for calculating the error bars should be explained (closed form formula, call to a library function, bootstrap, etc.)
        \item The assumptions made should be given (e.g., Normally distributed errors).
        \item It should be clear whether the error bar is the standard deviation or the standard error of the mean.
        \item It is OK to report 1-sigma error bars, but one should state it. The authors should preferably report a 2-sigma error bar than state that they have a 96\% CI, if the hypothesis of Normality of errors is not verified.
        \item For asymmetric distributions, the authors should be careful not to show in tables or figures symmetric error bars that would yield results that are out of range (e.g. negative error rates).
        \item If error bars are reported in tables or plots, The authors should explain in the text how they were calculated and reference the corresponding figures or tables in the text.
    \end{itemize}

\item {\bf Experiments compute resources}
    \item[] Question: For each experiment, does the paper provide sufficient information on the computer resources (type of compute workers, memory, time of execution) needed to reproduce the experiments?
    \item[] Answer: \answerNA{} 
    \item[] Justification: This paper does not include experiments.
    \item[] Guidelines:
    \begin{itemize}
        \item The answer NA means that the paper does not include experiments.
        \item The paper should indicate the type of compute workers CPU or GPU, internal cluster, or cloud provider, including relevant memory and storage.
        \item The paper should provide the amount of compute required for each of the individual experimental runs as well as estimate the total compute. 
        \item The paper should disclose whether the full research project required more compute than the experiments reported in the paper (e.g., preliminary or failed experiments that didn't make it into the paper). 
    \end{itemize}
    
\item {\bf Code of ethics}
    \item[] Question: Does the research conducted in the paper conform, in every respect, with the NeurIPS Code of Ethics \url{https://neurips.cc/public/EthicsGuidelines}?
    \item[] Answer: \answerYes{} 
    \item[] Justification: This research as conducted in the paper conforms in every respect with the NeurIPS Code of Ethics.
    \item[] Guidelines:
    \begin{itemize}
        \item The answer NA means that the authors have not reviewed the NeurIPS Code of Ethics.
        \item If the authors answer No, they should explain the special circumstances that require a deviation from the Code of Ethics.
        \item The authors should make sure to preserve anonymity (e.g., if there is a special consideration due to laws or regulations in their jurisdiction).
    \end{itemize}

\item {\bf Broader impacts}
    \item[] Question: Does the paper discuss both potential positive societal impacts and negative societal impacts of the work performed?
    \item[] Answer: \answerNA{} 
    \item[] Justification: There is no societal impact of the work performed.
    \item[] Guidelines:
    \begin{itemize}
        \item The answer NA means that there is no societal impact of the work performed.
        \item If the authors answer NA or No, they should explain why their work has no societal impact or why the paper does not address societal impact.
        \item Examples of negative societal impacts include potential malicious or unintended uses (e.g., disinformation, generating fake profiles, surveillance), fairness considerations (e.g., deployment of technologies that could make decisions that unfairly impact specific groups), privacy considerations, and security considerations.
        \item The conference expects that many papers will be foundational research and not tied to particular applications, let alone deployments. However, if there is a direct path to any negative applications, the authors should point it out. For example, it is legitimate to point out that an improvement in the quality of generative models could be used to generate deepfakes for disinformation. On the other hand, it is not needed to point out that a generic algorithm for optimizing neural networks could enable people to train models that generate Deepfakes faster.
        \item The authors should consider possible harms that could arise when the technology is being used as intended and functioning correctly, harms that could arise when the technology is being used as intended but gives incorrect results, and harms following from (intentional or unintentional) misuse of the technology.
        \item If there are negative societal impacts, the authors could also discuss possible mitigation strategies (e.g., gated release of models, providing defenses in addition to attacks, mechanisms for monitoring misuse, mechanisms to monitor how a system learns from feedback over time, improving the efficiency and accessibility of ML).
    \end{itemize}
    
\item {\bf Safeguards}
    \item[] Question: Does the paper describe safeguards that have been put in place for responsible release of data or models that have a high risk for misuse (e.g., pretrained language models, image generators, or scraped datasets)?
    \item[] Answer: \answerNA{} 
    \item[] Justification: This paper poses no such risks.
    \item[] Guidelines:
    \begin{itemize}
        \item The answer NA means that the paper poses no such risks.
        \item Released models that have a high risk for misuse or dual-use should be released with necessary safeguards to allow for controlled use of the model, for example by requiring that users adhere to usage guidelines or restrictions to access the model or implementing safety filters. 
        \item Datasets that have been scraped from the Internet could pose safety risks. The authors should describe how they avoided releasing unsafe images.
        \item We recognize that providing effective safeguards is challenging, and many papers do not require this, but we encourage authors to take this into account and make a best faith effort.
    \end{itemize}

\item {\bf Licenses for existing assets}
    \item[] Question: Are the creators or original owners of assets (e.g., code, data, models), used in the paper, properly credited and are the license and terms of use explicitly mentioned and properly respected?
    \item[] Answer: \answerNA{} 
    \item[] Justification: This paper does not use existing assets.
    \item[] Guidelines:
    \begin{itemize}
        \item The answer NA means that the paper does not use existing assets.
        \item The authors should cite the original paper that produced the code package or dataset.
        \item The authors should state which version of the asset is used and, if possible, include a URL.
        \item The name of the license (e.g., CC-BY 4.0) should be included for each asset.
        \item For scraped data from a particular source (e.g., website), the copyright and terms of service of that source should be provided.
        \item If assets are released, the license, copyright information, and terms of use in the package should be provided. For popular datasets, \url{paperswithcode.com/datasets} has curated licenses for some datasets. Their licensing guide can help determine the license of a dataset.
        \item For existing datasets that are re-packaged, both the original license and the license of the derived asset (if it has changed) should be provided.
        \item If this information is not available online, the authors are encouraged to reach out to the asset's creators.
    \end{itemize}

\item {\bf New assets}
    \item[] Question: Are new assets introduced in the paper well documented and is the documentation provided alongside the assets?
    \item[] Answer: \answerNA{} 
    \item[] Justification: This paper does not release new assets.
    \item[] Guidelines:
    \begin{itemize}
        \item The answer NA means that the paper does not release new assets.
        \item Researchers should communicate the details of the dataset/code/model as part of their submissions via structured templates. This includes details about training, license, limitations, etc. 
        \item The paper should discuss whether and how consent was obtained from people whose asset is used.
        \item At submission time, remember to anonymize your assets (if applicable). You can either create an anonymized URL or include an anonymized zip file.
    \end{itemize}

\item {\bf Crowdsourcing and research with human subjects}
    \item[] Question: For crowdsourcing experiments and research with human subjects, does the paper include the full text of instructions given to participants and screenshots, if applicable, as well as details about compensation (if any)? 
    \item[] Answer: \answerNA{} 
    \item[] Justification: This paper does not involve crowdsourcing nor research with human subjects.
    \item[] Guidelines:
    \begin{itemize}
        \item The answer NA means that the paper does not involve crowdsourcing nor research with human subjects.
        \item Including this information in the supplemental material is fine, but if the main contribution of the paper involves human subjects, then as much detail as possible should be included in the main paper. 
        \item According to the NeurIPS Code of Ethics, workers involved in data collection, curation, or other labor should be paid at least the minimum wage in the country of the data collector. 
    \end{itemize}

\item {\bf Institutional review board (IRB) approvals or equivalent for research with human subjects}
    \item[] Question: Does the paper describe potential risks incurred by study participants, whether such risks were disclosed to the subjects, and whether Institutional Review Board (IRB) approvals (or an equivalent approval/review based on the requirements of your country or institution) were obtained?
    \item[] Answer: \answerNA{} 
    \item[] Justification: This paper does not involve crowdsourcing nor research with human subjects.
    \item[] Guidelines:
    \begin{itemize}
        \item The answer NA means that the paper does not involve crowdsourcing nor research with human subjects.
        \item Depending on the country in which research is conducted, IRB approval (or equivalent) may be required for any human subjects research. If you obtained IRB approval, you should clearly state this in the paper. 
        \item We recognize that the procedures for this may vary significantly between institutions and locations, and we expect authors to adhere to the NeurIPS Code of Ethics and the guidelines for their institution. 
        \item For initial submissions, do not include any information that would break anonymity (if applicable), such as the institution conducting the review.
    \end{itemize}

\item {\bf Declaration of LLM usage}
    \item[] Question: Does the paper describe the usage of LLMs if it is an important, original, or non-standard component of the core methods in this research? Note that if the LLM is used only for writing, editing, or formatting purposes and does not impact the core methodology, scientific rigorousness, or originality of the research, declaration is not required.
    \item[] Answer: \answerNA{} 
    \item[] Justification: LLMs were not used in the core method development of this paper, and does not impact the originality or scientific rigorousness of the research.
    \item[] Guidelines:
    \begin{itemize}
        \item The answer NA means that the core method development in this research does not involve LLMs as any important, original, or non-standard components.
        \item Please refer to our LLM policy (\url{https://neurips.cc/Conferences/2025/LLM}) for what should or should not be described.
    \end{itemize}

\end{enumerate}

    

\end{document}